\documentclass[AER,reviewmode]{AEA}

\newif\ifarxiv
\arxivtrue

\draftSpacing{2.0}

\usepackage{amssymb,amsthm}
\usepackage{graphicx}
\usepackage{booktabs}
\usepackage{natbib}
\usepackage[colorlinks=false, pdfborder={0 0 0}]{hyperref}
\usepackage{enumitem}
\usepackage{tikz}
\usetikzlibrary{arrows.meta, positioning, shapes.geometric}
\usepackage{caption}
\usepackage{subcaption}
\usepackage{xcolor}
\usepackage{float}
\usepackage{pgfplots}
\pgfplotsset{compat=1.18}

\newtheorem{proposition}{Proposition}

\newtheorem{lemma}{Lemma}
\newtheorem{definition}{Definition}
\newtheorem{assumption}{Assumption}
\newtheorem{remark}{Remark}

\newcommand{\E}{\mathbb{E}}
\newcommand{\Var}{\text{Var}}
\newcommand{\Cov}{\text{Cov}}
\newcommand{\diff}{\,\mathrm{d}}

\ifarxiv
\title{When AI Levels the Playing Field: Skill Homogenization, Asset Concentration, and Two Regimes of Inequality}
\else
\title{WHEN AI LEVELS THE PLAYING FIELD: SKILL HOMOGENIZATION, ASSET CONCENTRATION, AND TWO REGIMES OF INEQUALITY}
\fi

\shortTitle{AI, Skill Homogenization, and Two Inequality Regimes}

\ifarxiv
  \author{Xupeng Chen\thanks{Corresponding author. Email: xc1490@nyu.edu. Department of Electrical and Computer Engineering, New York University. All errors are our own.}
  \and Shuchen Meng\thanks{Department of Financial Engineering, New York University.}}
  \date{\today}
\else
  \author{}
  \date{}
\fi

\JEL{J24, J31, I26, O33, L11, D63}
\Keywords{artificial intelligence, skill homogenization, education returns, complementary asset concentration, regime-conditional inequality, task-based framework}

\begin{document}

\begin{abstract}
Generative AI compresses within-task skill differences while shifting economic value toward concentrated complementary assets, creating an apparent paradox: the technology that equalizes individual performance may widen aggregate inequality. We formalize this tension in a task-based model with endogenous education, employer screening, and heterogeneous firms. The model yields two regimes whose boundary depends on AI's technology structure (proprietary vs.\ commodity) and labor market institutions (rent-sharing elasticity, asset concentration). A scenario analysis via Method of Simulated Moments, matching six empirical targets, disciplines the model's quantitative magnitudes; a sensitivity decomposition reveals that the five non-$\Delta$Gini moments identify mechanism rates but not the aggregate sign, which at the calibrated parameters is pinned by $m_6$ and $\xi$, while AI's technology structure ($\eta_1$ vs.\ $\eta_0$) independently crosses the boundary. The contribution is the mechanism---not a verdict on the sign. Occupation-level regressions using BLS OEWS data (2019--2023) illustrate why such data cannot test the model's task-level predictions. The predictions are testable with within-occupation, within-task panel data that do not yet exist at scale.
\end{abstract}

\maketitle

\ifarxiv\else
\vspace{-1em}
\begin{center}
\textit{Word count: approximately 9{,}000 (excluding tables, figures, and appendices)}
\end{center}
\fi

\section{The Paradox}
\label{sec:intro}

The same technology appears to be doing two contradictory things. At the task level, generative AI closes ability gaps: in customer support, AI assistance raises novice agents' resolution rates by 35 percent while leaving top performers unchanged \citep{brynjolfsson2023generative}. In professional writing, ChatGPT compresses the quality distribution by roughly 40 percent \citep{noy2023experimental}. GitHub Copilot users complete coding tasks 56 percent faster, with larger gains for less experienced developers \citep{peng2023impact}. At the market level, AI investment concentrates in a handful of firms with massive data assets and compute budgets \citep{babina2024artificial, zolas2024advanced}. Between-firm wage dispersion---already the dominant source of rising U.S.\ earnings inequality \citep{song2019firming}---shows no sign of reversing. The labor share continues its decline, driven by ``superstar firms'' that capture outsized market shares \citep{autor2020fall}.

This paper argues these are not contradictory observations but two sides of a single mechanism. When AI equalizes task-level performance, it devalues the skill differences that previously determined wages. Economic value does not disappear; it shifts toward complementary assets that AI cannot replicate: proprietary data, computational infrastructure, distribution networks, organizational routines. Because these assets are far more concentrated than human skills, the redistribution of value from skills to assets can increase aggregate inequality even as it compresses individual performance differences. We call this the \textit{inequality paradox} of AI.

In the model, the paradox operates through a four-link chain. First, \textit{skill homogenization}: AI provides an ability-independent output floor that compresses within-task productivity distributions (Section~\ref{subsec:homogenization}). Second, \textit{declining education returns}: as AI substitutes for codifiable cognitive skills, the marginal value of traditional education falls for the tasks AI can perform, while rising for complementary skills AI cannot replicate (Section~\ref{subsec:education}). Third, \textit{credential inflation}: compressed output distributions reduce employers' ability to screen on performance, increasing reliance on formal credentials as a sorting device (Section~\ref{subsec:screening}). Fourth, \textit{the concentrating channel}: value flows to concentrated complementary assets, widening between-firm inequality through a cumulative advantage process (Section~\ref{subsec:matthew}). Links 2--4 are model-derived predictions conditional on task composition (which the model treats parametrically), AI's technology structure ($\psi$ vs.\ $\eta_0$), and labor market institutions ($\xi$, $\mathrm{Gini}(K)$); they should be read as testable implications, not established findings.

\paragraph{What this paper contributes.} We make two claims, both theoretical rather than empirical. First, we provide a unified framework that connects AI's micro-level equalizing effects to its macro-level concentrating effects, extending the task-based approach of \citet{acemoglu2011skills} and \citet{acemoglu2022tasks} with endogenous education investment, an employer screening margin, and heterogeneous firms. The framework nests both outcomes---AI reducing or increasing aggregate inequality---as special cases of measurable conditions on market structure. Our approach complements \citet{acemoglu2024simple}, who examines AI's aggregate productivity effects; we focus on the distributional mechanisms.

Second, we derive the boundary condition separating the two regimes and calibrate it using a Method of Simulated Moments procedure. The model's qualitative result---that the inequality sign depends on the interaction of AI's technology structure ($\psi$ vs.\ $\eta_0$, Lemma~\ref{lem:eta_prime}) and labor market institutions ($\xi$, $\mathrm{Gini}(K)$)---is robust to functional form (Online Appendix~F--G). A moment--parameter sensitivity decomposition shows that the five non-$\Delta$Gini moments identify the mechanism rates but not the aggregate sign, which at the calibrated parameters is pinned by the $\Delta$Gini target and $\xi$; structurally, AI's technology structure ($\eta_1$ vs.\ $\eta_0$) is an independent determinant of comparable magnitude. The model thus identifies the mechanism, not the verdict. We do not claim to identify any causal link in the mechanism chain empirically; the paper's contribution is a calibrated structural model that maps AI's task-level equalizing effects through to aggregate inequality as a function of measurable technology-structure and institutional parameters, together with testable predictions and concrete identification strategies.

\paragraph{What the model says.} Four conditional predictions. First, AI compresses within-task productivity dispersion on substitutable tasks ($\rho > 0$), with the largest gains for low-ability workers. Second, the education premium declines for codifiable cognitive skills but rises for social, organizational, and judgment skills---conditional on $\sigma > 1$, which is not established by our calibration. Third, credential requirements rise in AI-exposed occupations as a screening response (an untested prediction; Section~\ref{subsec:screening}). Fourth, between-firm wage dispersion increases relative to within-firm dispersion in industries where AI is proprietary ($\psi > \eta_0$) and rent-sharing is strong. The sensitivity decomposition shows that the aggregate sign is pinned by $m_6$ and $\xi$ at the calibrated parameters, while AI's technology structure ($\eta_1$ vs.\ $\eta_0$) independently crosses the boundary---the model identifies the mechanism, not the verdict.

\paragraph{Positioning.} The task-based framework \citep{autor2003skill, acemoglu2011skills, acemoglu2019automation, acemoglu2022tasks} provides our production-side foundation. The generative AI and labor literature \citep{brynjolfsson2023generative, noy2023experimental, peng2023impact, dellacqua2023navigating} provides the micro-evidence for homogenization. The education and signaling literature---\citet{becker1962investment}, \citet{spence1973job}, \citet{goldin2008race}---grounds the education and screening analysis. The industrial organization literature on superstar firms \citep{autor2020fall, deloecker2020rise, song2019firming} and intangible capital \citep{haskel2018capitalism} documents conditions for the concentrating channel.

Our paper differs from \citet{acemoglu2024simple} in three respects: we model the full distributional chain rather than aggregate productivity, we endogenize education and screening responses, and we provide a quantitative boundary condition separating the two inequality regimes. Relative to automation-focused work \citep{frey2017future, acemoglu2018race}, we emphasize the novel mechanism through which AI's augmenting effects can generate inequality via complementary asset concentration---a channel absent from substitution models, but one that operates only when AI is deployed as a proprietary, capital-intensive technology ($\psi > \eta_0$) in environments with sufficient rent-sharing ($\xi$) and asset concentration ($\mathrm{Gini}(K)$). Historical parallels reinforce the mechanism: both factory mechanization and computerization produced inequality through asset concentration rather than through deskilling itself (Online Appendix~B).

Section~\ref{sec:model} presents the model and derives propositions. Section~\ref{sec:calibration} conducts the structural calibration via Method of Simulated Moments. Section~\ref{sec:evidence} assesses existing evidence for each link and uses a cross-sectional regression to illustrate why occupational wage data cannot test the model's predictions. Section~\ref{sec:empirics} outlines testable predictions. Section~\ref{sec:policy} develops policy implications. Extended proofs, historical parallels, additional sensitivity analyses, and detailed policy discussion are in the Online Appendix.

\section{Model}
\label{sec:model}

The model has four stages, each building on the previous. We first establish the production environment and characterize how AI changes within-task productivity distributions. We then analyze the education investment response, the employer screening problem, and cumulative advantage at the firm level. A fifth subsection introduces worker-firm matching to integrate the worker-side and firm-side analyses. The model operates in partial equilibrium, holding aggregate prices fixed while deriving comparative statics with respect to AI capability.\footnote{We do not claim general equilibrium closure. Full GE would require specifying goods markets, endogenous task creation, and AI investment decisions. These are important extensions; we discuss in Section~\ref{subsec:ge_discussion} why partial equilibrium predictions are informative about the direction of effects.}

\begin{figure}[t]
\centering
\resizebox{0.88\textwidth}{!}{%
\begin{tikzpicture}[
    block/.style={rectangle, draw, rounded corners, text width=3.2cm, minimum height=0.9cm, align=center, font=\footnotesize},
    arrow/.style={-{Stealth[length=2.5mm]}, thick},
    lbl/.style={font=\scriptsize, fill=white, inner sep=1pt}
]
\node[block, fill=blue!10] (ai) at (0,0) {AI Capability Rise\\$A_t \uparrow$};

\node[block, fill=green!10] (homog) at (-3.5,-2) {Skill Homogenization\\(within-task CV $\downarrow$)};
\node[block, fill=green!10] (complement) at (3.5,-2) {Returns to Compl.\\Assets $\uparrow$};

\node[block, fill=orange!10] (educ) at (-3.5,-4) {Education Returns $\downarrow$\\(codifiable skills)};

\node[block, fill=orange!10] (cred) at (-3.5,-6) {Credential Inflation\\(screening response)};
\node[block, fill=red!10] (matthew) at (3.5,-6) {Concentrating\\Channel};

\draw[arrow] (ai) -- node[lbl, left, pos=0.5] {Prop.~\ref{prop:homogenization}} (homog);
\draw[arrow] (ai) -- node[lbl, right, pos=0.5] {Prop.~\ref{prop:matthew}} (complement);
\draw[arrow] (homog) -- node[lbl, left, pos=0.5] {Prop.~\ref{prop:education}} (educ);
\draw[arrow] (educ) -- node[lbl, left, pos=0.5] {Prop.~\ref{prop:credentials}} (cred);
\draw[arrow] (cred) -- (matthew);
\draw[arrow] (complement) -- (matthew);

\draw[thick, dashed, blue] ([xshift=-0.6cm, yshift=0.5cm]homog.north west) rectangle ([xshift=0.6cm, yshift=-0.4cm]cred.south east);
\node[font=\footnotesize\bfseries, blue] at ([yshift=0.8cm]homog.north) {Equalizing Channel};

\draw[thick, dashed, red] ([xshift=-0.6cm, yshift=0.5cm]complement.north west) rectangle ([xshift=0.6cm, yshift=-0.4cm]matthew.south east);
\node[font=\footnotesize\bfseries, red] at ([yshift=0.8cm]complement.north) {Concentrating Channel};

\end{tikzpicture}%
}
\caption{Model-implied mechanism chain from AI capability to inequality. The \textcolor{blue}{equalizing channel} operates through within-task compression. The \textcolor{red}{concentrating channel} operates through rising returns to complementary assets. Net inequality depends on AI's technology structure (proprietary vs.\ commodity) and labor market institutions ($\xi$, $\mathrm{Gini}(K)$).}
\label{fig:chain}
\end{figure}
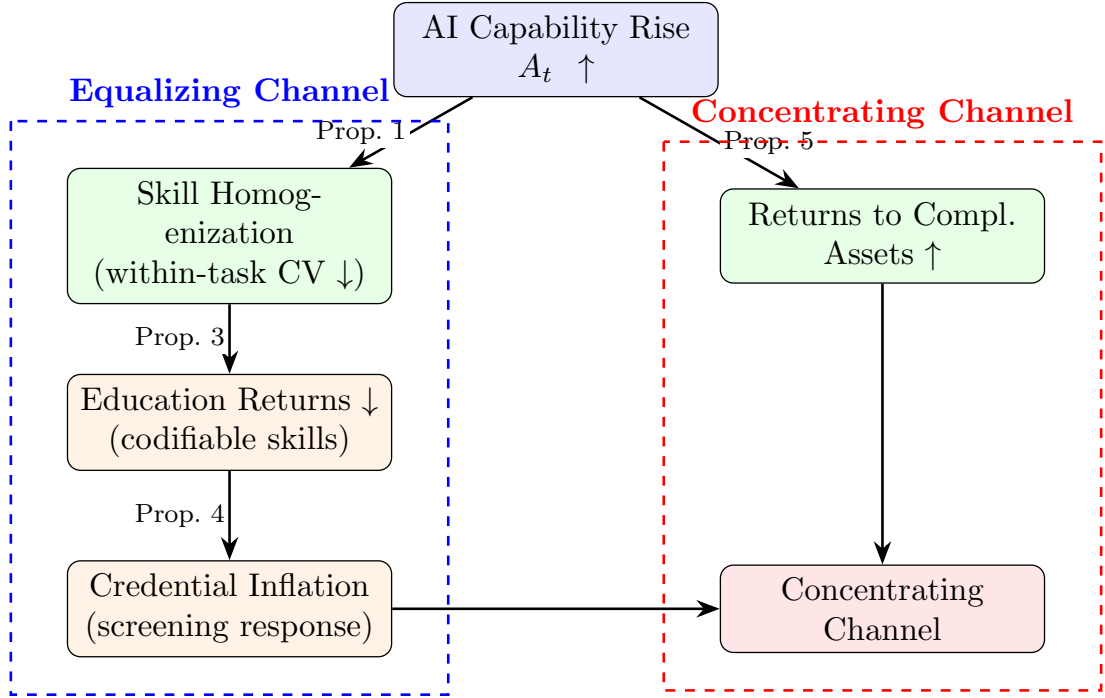

\subsection{How AI Changes Task Production}
\label{subsec:production}

\begin{assumption}[Task-based production]
\label{ass:production}
Final output $Y$ is produced by aggregating a continuum of tasks $z \in [0,1]$:
\begin{equation}
Y = \left( \int_0^1 y(z)^{\frac{\sigma-1}{\sigma}} \diff z \right)^{\frac{\sigma}{\sigma-1}},
\label{eq:production}
\end{equation}
where $\sigma > 0$ is the elasticity of substitution across tasks.
\end{assumption}

Workers are heterogeneous in ability $\theta$, drawn from $F(\theta)$ with support $[\underline{\theta}, \bar{\theta}]$, mean $\mu_\theta$, and variance $\sigma_\theta^2$. Each worker chooses education $e \geq 0$ at cost $c(e, \theta)$, where $c_e > 0$, $c_{ee} > 0$, and $c_{e\theta} < 0$ (higher ability reduces marginal cost of education, as in \citealt{spence1973job}). Education transforms raw ability into effective human capital \citep{becker1962investment, mincer1974schooling}:
\begin{equation}
h(\theta, e) = \theta \cdot g(e),
\label{eq:humancapital}
\end{equation}
where $g(\cdot)$ is increasing and concave with $g(0) = 1$.\footnote{The multiplicative form $h = \theta g(e)$ implies ability-education complementarity. Under additive separability $h = \theta + g(e)$, Proposition~\ref{prop:homogenization} is unchanged but the education response differs (Remark~\ref{rem:additive}).}

\begin{assumption}[AI augmentation is additive]
\label{ass:taskoutput}
Worker $i$ with human capital $h_i$ produces task output:
\begin{equation}
y_i(z) = \begin{cases}
h_i \cdot \phi(z) + \alpha(z) \cdot A_t & \text{if } z \in \mathcal{S}(A_t) \\
h_i \cdot \phi(z) & \text{if } z \notin \mathcal{S}(A_t)
\end{cases},
\label{eq:taskoutput}
\end{equation}
where $\phi(z) > 0$ is task-specific human capital intensity, $A_t \geq 0$ is AI capability at time $t$, $\alpha(z) \geq 0$ is the task-specific AI productivity parameter, and $\mathcal{S}(A_t) \subseteq [0,1]$ is the AI-augmentable task set, expanding as AI improves.
\end{assumption}

The additive specification has a draft-and-edit microfoundation (Online Appendix~C): AI generates an ability-independent draft, and workers add editing value proportional to $h_i$. The additivity is structural---a novice and expert prompting the same model receive statistically identical drafts; the expert's advantage enters only through editing.

The additive specification is the central modeling choice. The AI contribution $\alpha(z)A_t$ does not depend on $h_i$: a novice using ChatGPT gets the same AI-generated baseline as an expert. The expert still outperforms the novice ($h_i \phi(z)$ remains), but AI narrows the gap by adding a common floor. This matches the empirical pattern: \citet{brynjolfsson2023generative} report bottom-quintile agents gained 35 percent while top-quintile agents gained near zero, precisely the signature of an additive technology.

Whether the additive specification is correct is an empirical question, not an assumption that can be taken on faith. \citet{dellacqua2023navigating} find that on tasks within AI's frontier, the pattern is additive; on tasks at or beyond the frontier, high-ability workers use AI as a multiplicative amplifier while low-ability workers are harmed by over-reliance. We analyze the partially multiplicative case ($y_i = h_i\phi(1+\beta A_t) + \alpha A_t$, with $\beta \geq 0$) in Proposition~\ref{prop:homogenization_general} and derive the condition under which homogenization survives.

\subsection{Skill Homogenization Is Arithmetic, but the Economics Is Not}
\label{subsec:homogenization}

\begin{proposition}[Skill homogenization]
\label{prop:homogenization}
For any AI-augmentable task $z \in \mathcal{S}(A_t)$, the coefficient of variation (CV) of task output across workers is strictly decreasing in AI capability:
\begin{equation}
\frac{\partial}{\partial A_t} CV_z(A_t) = -\frac{\phi(z)\sigma_h \cdot \alpha(z)}{[\phi(z)\mu_h + \alpha(z)A_t]^2} < 0,
\label{eq:cv_decreasing}
\end{equation}
where $\mu_h = \E[h_i]$ and $\sigma_h = \sqrt{\Var(h_i)}$. The proportional gain from AI, $\alpha(z)A_t / (h_i\phi(z))$, is strictly decreasing in $h_i$.
\end{proposition}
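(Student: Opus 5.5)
The plan is a direct computation of the first two moments of task output across workers, holding fixed the distribution of human capital $h_i$. Fix $z \in \mathcal{S}(A_t)$ and write $\phi=\phi(z)$, $\alpha=\alpha(z)$. By Assumption~\ref{ass:taskoutput}, $y_i(z) = \phi h_i + \alpha A_t$ is an affine function of $h_i$. The intercept $\alpha A_t$ is common to all workers. Linearity of expectation gives $\E[y_i(z)] = \phi\mu_h + \alpha A_t$, which is strictly positive since $\phi>0$, $h_i = \theta_i g(e_i)>0$ and $\alpha A_t \ge 0$. Translation invariance of the variance gives $\Var(y_i(z)) = \phi^2\sigma_h^2$, so the standard deviation $\phi\sigma_h$ does not depend on $A_t$ at all. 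Hence
\[
CV_z(A_t) = \frac{\phi\sigma_h}{\phi\mu_h + \alpha A_t}.
\]
Differentiating in $A_t$ by the quotient rule, with a constant numerator, yields exactly \eqref{eq:cv_decreasing}.

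Strict negativity needs $\phi>0$, $\alpha>0$ and $\sigma_h>0$. The first is assumed. For the other two I would state the implicit nondegeneracy conditions: $\alpha(z)>0$ on the augmentable set, which is what membership in $\mathcal{S}(A_t)$ should mean, and nondegenerate ability, $\sigma_\theta>0$, which implies $\sigma_h>0$. Without these the derivative is only weakly negative, so I would flag this as a remark.

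The main conceptual point I expect to have to address is whether $\mu_h$ and $\sigma_h$ may be held fixed while differentiating in $A_t$. Education is endogenous and will respond to AI in Section~\ref{subsec:education}. I would therefore state the proposition as a partial (impact) effect, holding the distribution of $h$ fixed, consistent with the sequential structure of the model. As a robustness note, I would add that the sign persists whenever the induced change in $\sigma_h/\mu_h$ is small relative to the direct effect.

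For the second claim, the proportional gain $G(h) = \alpha A_t/(h\phi)$ has derivative $-\alpha A_t/(\phi h^2)$ in $h$. This is strictly negative for $A_t>0$ and $\alpha>0$, and it equals zero at $A_t=0$, so the claim should be read for $A_t>0$.
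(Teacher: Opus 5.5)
Your proposal is correct and follows the paper's own proof: translation invariance fixes $\Var(y_i(z))=\phi(z)^2\sigma_h^2$, the mean $\phi(z)\mu_h+\alpha(z)A_t$ rises linearly in $A_t$, and differentiating the ratio gives the stated derivative. Your additions are conditions the paper uses without stating them: $\alpha(z)>0$ and $\sigma_h>0$ for strict monotonicity, $A_t>0$ for the second claim, and holding the distribution of $h$ fixed against the endogenous education response.
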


\begin{proof}
Variance is shift-invariant: $\Var(y_i(z)) = \phi(z)^2\sigma_h^2$ for all $A_t$. The mean increases: $\E[y_i(z)] = \phi(z)\mu_h + \alpha(z)A_t$. Division gives (\ref{eq:cv_decreasing}).
\end{proof}

The mathematics is elementary---adding a constant raises the mean without changing variance. We are transparent about this. The economic content lies elsewhere: in the claim that the additive specification correctly describes how generative AI interacts with human skill. Three pieces of evidence support this claim. First, the customer support data of \citet{brynjolfsson2023generative}: the 35/0 percent gain ratio for bottom/top quintiles is consistent with an additive floor but inconsistent with multiplicative augmentation (which would benefit all workers proportionally). Second, the Noy-Zhang experiment: the compression in quality ratings is consistent with AI providing a common baseline that lifts weak writers more than strong ones. Third, the mechanism: current generative AI systems produce outputs from learned distributions, independent of the user's pre-existing ability. The quality of a ChatGPT draft does not depend on whether the person requesting it has a PhD.

Where this breaks down is the ``jagged frontier'' of \citet{dellacqua2023navigating}: AI excels on some subtasks and fails on others, and workers who trust AI on tasks outside its competence suffer performance losses. Homogenization is therefore a task-level result that holds within $\mathcal{S}(A_t)$ but not necessarily at the occupation level, where the aggregate depends on task composition and workers' metacognitive ability to recognize AI's limitations.

\paragraph{Contrary evidence, scope, and the multiplicative case.} Not all evidence supports the additive specification. \citet{dellacqua2023navigating} find that on tasks beyond AI's competence frontier, high-ability consultants benefit more (a multiplicative pattern). We model this through the partially multiplicative specification ($\beta > 0$) and show that for empirically relevant $\beta < 1.10$, homogenization survives (Online Appendix~D). The prediction is task-specific: homogenization on generative tasks, potential divergence on analytical tasks. Importantly, the additive specification is a \textit{task-level} claim, not an occupation-level one. An occupation bundles tasks with heterogeneous $\rho$; the occupation-level effect is a weighted average whose sign depends on task composition---an empirical question requiring within-occupation, within-task data.

\begin{proposition}[Homogenization survives partial complementarity]
\label{prop:homogenization_general}
Under the general specification $y_i(z) = h_i\phi(z)(1 + \beta(z)A_t) + \alpha(z)A_t$, the CV of task output decreases in $A_t$ if and only if:
\begin{equation}
\frac{\alpha(z)}{\beta(z)\phi(z)\mu_h} > \frac{CV_h(1+\beta(z)A_t)}{1 + CV_h^2(1+\beta(z)A_t)^2},
\label{eq:general_condition}
\end{equation}
where $CV_h = \sigma_h/\mu_h$. Homogenization obtains when the additive component dominates the multiplicative component.
\end{proposition}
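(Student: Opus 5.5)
The plan is to follow the proof of Proposition~\ref{prop:homogenization}: compute the first two moments of $y_i(z)$ under the general specification, form the CV, and sign its derivative in $A_t$. Write $m \equiv 1+\beta(z)A_t$ and suppress the $z$ argument. Since $y_i = h_i\phi m + \alpha A_t$, the multiplicative factor scales the dispersion and the additive term shifts only the mean. Hence $\sqrt{\Var(y_i)} = \phi\sigma_h m$ and $\E[y_i] = \phi\mu_h m + \alpha A_t$, so
\begin{equation*}
CV_z(A_t) = \frac{\phi\sigma_h m}{\phi\mu_h m + \alpha A_t}.
\end{equation*}

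The second step is the quotient rule. The numerator has derivative $\phi\sigma_h\beta$ and the denominator $D$ has derivative $\phi\mu_h\beta+\alpha$. The sign of $\partial CV_z/\partial A_t$ is therefore the sign of
\begin{equation*}
\phi\sigma_h\bigl[\beta(\phi\mu_h m+\alpha A_t) - m(\phi\mu_h\beta+\alpha)\bigr].
\end{equation*}
Expanding, the $\phi\mu_h\beta m$ terms cancel. What remains is $\phi\sigma_h\alpha(\beta A_t - m)$, which equals $-\phi\sigma_h\alpha$. The homogeneity (scaling) effect of $\beta$ thus drops out of the coefficient of variation entirely. As a check, the case $\beta=0$ should reproduce (\ref{eq:cv_decreasing}).

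This is where I expect the main obstacle. The cancellation says the CV falls whenever $\alpha(z)>0$, for every $\beta\ge 0$. That does not produce a condition of the form (\ref{eq:general_condition}), and it does not support an ``if and only if''. The right-hand side of (\ref{eq:general_condition}) has the shape $x/(1+x^2)$ with $x = CV_h m$. That shape is what one gets when dispersion is normalized by the raw second moment rather than by the mean. For example, one could use $\sigma_y/\sqrt{\E[y^2]} = CV_y/\sqrt{1+CV_y^2}$, or track $\Var(y)/\E[y^2]$, or compare the proportional gain $\partial\log y_i/\partial A_t$ between workers one standard deviation apart.

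I would first try the $\E[y^2]$ normalization. Its derivative brings in $\E[y^2] = \phi^2 m^2(\mu_h^2+\sigma_h^2) + 2\phi\mu_h m\alpha A_t + \alpha^2A_t^2$. The factor $1+CV_h^2 m^2$ then appears after dividing through by $\phi^2\mu_h^2 m^2$, and the comparison of $\alpha/(\beta\phi\mu_h)$ against $CV_h m/(1+CV_h^2 m^2)$ should come out of isolating the $\alpha$ terms. If no natural dispersion measure reproduces (\ref{eq:general_condition}) exactly, I would fall back on what the computation above does establish. Note that $\alpha>0$ is implied by the displayed condition, since its right-hand side is positive. The displayed inequality is therefore a valid sufficient condition for the CV to decrease, and the proposition's qualitative message, that the additive component drives homogenization, holds in the stronger form that any $\alpha(z)>0$ suffices for the CV. The ``only if'' direction would need to be restated for whichever alternative dispersion measure generates the threshold.
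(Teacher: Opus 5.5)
\textbf{Your computation is correct, and it refutes the statement as written rather than proving it.} With $m=1+\beta A_t$ you get
\[
\frac{\partial CV_z}{\partial A_t}=-\frac{\phi\sigma_h\alpha}{(\phi\mu_h m+\alpha A_t)^2},
\]
and this expression does not depend on $\beta$. Equivalently, $CV_z=\sigma_h/[\mu_h+(\alpha/\phi)\,A_t/(1+\beta A_t)]$, and $A_t/(1+\beta A_t)$ is increasing in $A_t$. Hence, for $\sigma_h>0$, the CV falls if and only if $\alpha(z)>0$.

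The ``if'' half of the proposition survives, because for $\beta>0$ the displayed inequality forces $\alpha>0$. The ``only if'' half is false. Take $\phi=\mu_h=\sigma_h=1$, $A_t=0$, $\beta=1$, $\alpha=0.1$. The left side of (\ref{eq:general_condition}) is $0.1$ and the right side is $1/2$, yet $\partial CV_z/\partial A_t=-0.1<0$.

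The same failure occurs at the paper's own calibration. With $\alpha/(\phi\mu_h)=0.526$, $CV_h=0.35$, $A_t=1$ and $\beta=2$, the condition fails ($0.26<0.50$), but the CV still declines. Whatever argument Online Appendix~A uses, it cannot be a correct computation of $\partial CV_z/\partial A_t$. The reported $\beta^*\approx 1.10$ and the $1.05/0.41$ margin at $\beta=0.5$ are exactly what (\ref{eq:general_condition}) yields at $A_t=1$ with those values. So they do not mark any breakdown of CV compression.

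\textbf{The one part of your plan that would not work is the search for another normalization.} Both $\sigma_y/\sqrt{\E[y^2]}=CV_y/\sqrt{1+CV_y^2}$ and $\Var(y)/\E[y^2]=CV_y^2/(1+CV_y^2)$ are strictly increasing in $CV_y$. They therefore decline exactly when $CV_y$ does and cannot generate a $\beta$-dependent threshold.

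The proportional-gain comparison fails the same way. Here $\partial^2\log y_i/\partial A_t\,\partial h_i=-\phi\alpha/(h_i\phi m+\alpha A_t)^2$, which is again free of $\beta$ and negative whenever $\alpha>0$.

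So skip that detour and write up your fallback as the result. Under the general specification, the CV decreases in $A_t$ for every $\beta(z)\ge 0$ whenever $\alpha(z)>0$. Condition (\ref{eq:general_condition}) is sufficient but not necessary, and the ``if and only if'' cannot be proved because it is false.
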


Proof in Online Appendix~A. At calibrated values ($\hat{\alpha}/(\phi\mu_h) = 0.53$, $CV_h = 0.35$), the condition holds by a comfortable margin ($\sim$5--8$\times$ for studied tasks).

The additive specification is a tractable special case within a broader CES family. Online Appendix~F establishes that homogenization holds for all CES task-production specifications with $\rho > 0$ (substitutes), is neutral at $\rho = 0$ (Cobb-Douglas), and reverses for $\rho < 0$ (complements). The empirically relevant range for generative AI tasks is $\rho \in (0.3, 1]$, yielding 33--41\% CV reduction. The complement case ($\rho < 0$) describes analytical tasks---medical differential diagnosis, complex legal strategy, architectural system design---where AI amplifies experienced practitioners' judgment rather than providing an independent substitute.

\subsection{Why Education Returns Decline Unevenly}
\label{subsec:education}

Workers choose education to maximize expected net income. Under competitive assignment, the return to education on an AI-augmented task bundle is:
\begin{equation}
W(\theta, e; A_t) = \theta g(e) \Phi + \mathcal{A}(A_t),
\label{eq:wage}
\end{equation}
where $\Phi \equiv \int_0^1 p(z)\phi(z)\diff z$ is the human-capital-weighted task price index and $\mathcal{A}(A_t) \equiv A_t\int_{\mathcal{S}} p(z)\alpha(z)\diff z$ is the AI rent (ability-independent). The AI rent is a lump sum: it does not affect the first-order condition $\theta g'(e^*)\Phi = c_e(e^*,\theta)$. The entire education response operates through $\Phi$.

\begin{proposition}[Bifurcating education returns]
\label{prop:education}
As AI capability expands $\mathcal{S}(A_t)$:
\begin{enumerate}[label=(\alph*)]
\item The task price index $\Phi$ declines when $\sigma > 1$ and tasks where AI is productive are also tasks where human capital was previously valuable ($\Cov_{\mathcal{S}}(\phi, \alpha) > 0$). The education premium for codifiable cognitive skills falls.

\item When $\sigma < 1$ (tasks are gross complements), the price of AI-abundant tasks rises and education returns increase---AI makes human skill more valuable as a bottleneck input. The sign of the education response depends on $\sigma$, which is an unresolved empirical question.\footnote{\citet{acemoglu2019automation} work with $\sigma > 1$, but task complementarity ($\sigma < 1$) is plausible for some task bundles. The education response to AI is ambiguous absent a reliable value of $\sigma$.}

\item For tasks outside $\mathcal{S}(A_t)$---those requiring social interaction, complex judgment, organizational leadership, or physical dexterity---relative prices rise (under $\sigma > 1$), increasing returns to education in these skills.
\end{enumerate}
\end{proposition}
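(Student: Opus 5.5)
The plan is to work from the CES price structure implied by Assumption~\ref{ass:production}. Under competitive markets, the price of task $z$ is $p(z) = P\,(Y/y(z))^{1/\sigma}$, where $P$ is the aggregate price, held fixed in partial equilibrium. Here $y(z)$ is aggregate task output, which under Assumption~\ref{ass:taskoutput} equals $H\phi(z) + \mathbf{1}\{z\in\mathcal{S}(A_t)\}\,\alpha(z)A_t$, with $H$ aggregate human capital. I will differentiate $\Phi = \int_0^1 p(z)\phi(z)\diff z$ with respect to $A_t$, including the expansion of $\mathcal{S}(A_t)$, and split the derivative into two terms.

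\textbf{The own-price term.} For $z \in \mathcal{S}$, the log-price changes by $-\tfrac{1}{\sigma}\,\partial \ln y(z)/\partial A_t = -\tfrac{1}{\sigma}\,\alpha(z)/y(z)$. This term is negative and weighted by $p(z)\phi(z)$.

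\textbf{The aggregate term.} Every task price moves by $\tfrac{1}{\sigma}\,\partial\ln Y/\partial A_t$. Using the CES identity $\partial \ln Y/\partial \ln y(z) = p(z)y(z)/(PY)$, this term is positive and common to all tasks.

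The sign of $\partial\Phi/\partial A_t$ is therefore the sign of
\[
\int_{\mathcal{S}} p\phi\,\frac{\alpha}{y}\diff z \;-\; \Phi\int_{\mathcal{S}} \frac{p\,\alpha}{PY}\diff z .
\]
This is the $p\phi$-weighted mean of $\alpha/y$ on $\mathcal{S}$, compared with the economy-wide average.

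\textbf{Part (a).} With $\sigma>1$, I will rewrite the difference as a weighted covariance over $\mathcal{S}$ between $\phi$ and $\alpha$. The first term concentrates AI's price-depressing effect on high-$\phi$ tasks whenever $\Cov_{\mathcal{S}}(\phi,\alpha)>0$, so it dominates the uniform aggregate term. Hence $\Phi$ falls. Since $\mathcal{A}(A_t)$ does not enter the first-order condition $\theta g'(e^*)\Phi = c_e$, the implicit function theorem together with $c_{ee}>0$ and $g''\le 0$ gives $\partial e^*/\partial\Phi>0$. The marginal return $\theta g'(e)\Phi$ falls with $\Phi$, so the education premium falls.

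The main obstacle is that the covariance condition as stated is not literally sufficient. The weights $p/y$ and the role of $\sigma$ both enter. Indeed, with $\sigma$ only multiplying both terms by $1/\sigma$, the sign of the bracket is independent of $\sigma$. So the $\sigma>1$ versus $\sigma<1$ distinction must come from the value-share effect instead. To handle this, I would work with revenue rather than price. Task revenue is $p(z)y(z) \propto y(z)^{(\sigma-1)/\sigma}$, which rises with AI output when $\sigma>1$ and falls when $\sigma<1$. The human share of that revenue is $H\phi/y$, and it falls with $A_t$. Taking the return to human capital on task $z$ as $p(z)\phi(z)$ and tracking the elasticity $1-1/\sigma$ yields the sign switch. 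I would state the result under a first-order (small $A_t$) approximation, where the weights are approximately constant, and note this as a regularity condition.

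\textbf{Parts (b) and (c).} For (b), I will use the same decomposition with $\sigma<1$. Now the complementarity channel $\partial \ln Y$ raises prices on the human-intensive, non-AI-abundant margin, so the term reverses and $\Phi$, and hence $e^*$, rises. For (c), take $z\notin\mathcal{S}$. Its output is unchanged, so only the aggregate term acts: $\partial\ln p(z)/\partial A_t = \tfrac{1}{\sigma}\,\partial\ln Y/\partial A_t>0$. Its price relative to any AI-augmented task rises by the own-price term, and the returns to education in those skills rise accordingly.
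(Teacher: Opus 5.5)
\textbf{There is a genuine gap in part (a).} It is at the point you flagged, but it cannot be repaired as you propose.

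In your closure ($P$ fixed, $Y$ responding, which is also what the paper uses for part (c)), the two terms of $\partial\Phi/\partial A_t$ at fixed $\mathcal{S}$ combine exactly, and with the opposite sign from the one you need. Let $\omega=py/(PY)$, which integrates to one because $PY=\int_0^1 py\diff z=H\Phi+\mathcal{A}(A_t)$. Set $u=\alpha\mathbf{1}_{\mathcal{S}}/y$ and $v=\phi/y$. Your bracket equals $PY\,\Cov_\omega(u,v)$. Since $y=H\phi+A_t\alpha\mathbf{1}_{\mathcal{S}}$ gives $Hv+A_tu\equiv1$,
\[
\frac{\partial\Phi}{\partial A_t}=-\frac{PY}{\sigma}\Cov_\omega(u,v)=\frac{PY\,A_t}{\sigma H}\,\Var_\omega\!\left(\frac{\alpha\mathbf{1}_{\mathcal{S}}}{y}\right)\ \geq 0
\]
for every $\sigma>0$ and whatever the sign of $\Cov_{\mathcal{S}}(\phi,\alpha)$. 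Equivalently, $\Phi=P\,\partial Y/\partial H$, and $Y$ is constant-returns and concave in $(H,A_t)$, so $Y_{HA}=-(H/A_t)Y_{HH}\geq0$.

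Three consequences follow. First, $\partial\Phi/\partial A_t$ is $-1/\sigma$ times your bracket, not of its sign. Second, your planned covariance rewriting lands on a variance of the wrong sign, not on $\Cov_{\mathcal{S}}(\phi,\alpha)$. Third, at $A_t=0$ the terms cancel exactly, since $\Phi/(PY)=\phi/y=1/H$, so your small-$A_t$ linearization gives no sign at all.

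The only remaining route for $\Phi$ to fall is tasks entering $\mathcal{S}$. You mention this margin but never differentiate it, and your hypotheses do not sign it. Take $\sigma=2$, $H=1$, $\phi=\alpha=1$, $\mathcal{S}=[0,s]$ and $t=\sqrt{1+A_t}$. Then $\Phi=1+s(1-s)(t-1)^2/t$, which for $A_t>0$ is strictly increasing in $t$, and also in $s$ when $s<1/2$. A small perturbation making $\Cov_{\mathcal{S}}(\phi,\alpha)>0$ preserves both strict inequalities.

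\textbf{The revenue device does not rescue the $\sigma\gtrless1$ switch.} The return to human capital on task $z$ is $p\phi=(py)(\phi/y)$. Holding $Y$ fixed, the elasticity $1-1/\sigma$ of revenue is exactly offset by the elasticity $-1$ of $\phi/y$, leaving $p\phi\propto y^{-1/\sigma}$. That is just the $1/\sigma$ scaling you had already found, so the reversal you assert in (b) has no derivation.

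Your observation that $\sigma$ enters only through $1/\sigma$ is correct. It is a problem for the statement itself, not an obstacle to engineer around. The paper's own sketch reaches (a)--(b) only in two ways. It holds the factor $PY^{1/\sigma}$ in $p\propto y^{-1/\sigma}$ fixed in (a) while letting $Y$ rise in (c). It also asserts that $p\propto y^{-1/\sigma}$ increases in $y$ when $\sigma<1$, which is false for every $\sigma>0$.

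What does go through in your plan:
\begin{itemize}
\item Part (c), which in fact holds for all $\sigma>0$, not just $\sigma>1$.
\item The comparative static $\partial e^*/\partial\Phi=\theta g'/(c_{ee}-\theta g''\Phi)>0$.
\end{itemize}
A proof of (a) would need extra assumptions that make the extensive margin negative and dominant. Neither your plan nor the hypotheses $\sigma>1$ and $\Cov_{\mathcal{S}}(\phi,\alpha)>0$ supply them.
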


\begin{proof}
(a) Under CES aggregation, $p(z) \propto y(z)^{-1/\sigma}$. AI raises $y(z)$ for $z \in \mathcal{S}$, so $p(z)$ falls when $\sigma > 1$. $\Phi$ declines when the affected tasks have high $\phi(z)$. (b) When $\sigma < 1$, higher $y(z)$ raises $p(z)$. (c) For $z \notin \mathcal{S}$, $y(z)$ is unchanged while aggregate $Y$ rises, so $p(z)$ rises when $\sigma > 1$.
\end{proof}

The prediction is not that education becomes worthless. It is that the composition of valuable education shifts: away from codifiable cognitive skills (analysis, writing, programming) where AI provides a capable substitute, toward social skills \citep{deming2017growing}, organizational capabilities, and judgment under uncertainty. The college premium could fall for computer science majors while rising for nursing or management---a pattern of bifurcation, not uniform decline.

This bifurcation has precedent. \citet{goldin2008race} show that the education-technology race has historically depended on whether education systems adapted to new skill demands. AI creates a new version of this race, but against a faster opponent.

\begin{remark}[Additive vs.\ multiplicative human capital]
\label{rem:additive}
Under $h = \theta + g(e)$, the FOC becomes $g'(e^*)\Phi = c_e(e^*,\theta)$: the education response is independent of $\theta$. Under the multiplicative form, higher-$\theta$ workers adjust more. This generates a testable prediction: if human capital is multiplicative, AI should produce heterogeneous education responses across the ability distribution; if additive, the response should be uniform.
\end{remark}

\subsection{Credential Inflation as a Screening Response: A Prediction}
\label{subsec:screening}

\textit{This section derives a testable prediction, not an explanation of observed credentialing trends. No AI-specific evidence for this mechanism exists; the empirical test requires post-2023 job posting data linked to AI adoption measures (Section~\ref{sec:empirics}, H3).}

The previous section assumes employers observe worker productivity. In practice, hiring decisions precede productivity observation: employers must assess candidates before seeing their AI-augmented output. This section introduces asymmetric information at the hiring stage and shows that AI-driven homogenization degrades the employer's screening ability, generating credential inflation.

\paragraph{Information structure.} Before hiring, the employer observes a noisy signal $s = \theta + \varepsilon$, $\varepsilon \sim \mathcal{N}(0, \sigma_\varepsilon^2)$, and whether the worker holds a credential $d \in \{0,1\}$.

The connection between AI and screening quality operates through what we term ``diagnostic variance'': the fraction of output variation attributable to ability rather than to AI. Define:
\begin{definition}[Diagnostic variance]
\label{def:diagnostic}
\begin{equation}
V_D(A_t) \equiv \frac{\phi^2\sigma_h^2}{\phi^2\sigma_h^2 + (\phi\mu_h + \alpha A_t)^2} \cdot \sigma_{y}^2.
\end{equation}
\end{definition}
As AI raises mean output ($\phi\mu_h + \alpha A_t$ increases) without changing ability-driven variance ($\phi^2\sigma_h^2$ is fixed), $V_D$ declines. The intuition: when everyone's AI-assisted output looks similar, a strong report could reflect strong skills or effective AI use, and the employer cannot easily distinguish these.

\begin{proposition}[Credential inflation from screening degradation]
\label{prop:credentials}
\begin{enumerate}[label=(\alph*)]
\item The signal reliability ratio $\rho(A_t) = V_D(A_t)/(V_D(A_t) + \sigma_\varepsilon^2)$ declines in $A_t$.

\item The Bayesian employer's posterior shifts weight from direct assessment to credentials: $\E[\theta|s,d;A_t] = \rho(A_t) s + (1-\rho(A_t))\E[\theta|d]$. As $\rho \to 0$, the credential is the only screening device.

\item In a Spence signaling equilibrium, the separating threshold $\hat{\theta}$ satisfies $w(d\!=\!1) - w(d\!=\!0) = c(d,\hat{\theta})$. As the credential's screening value rises (part b), the wage gap $w(1)-w(0)$ increases and $\hat{\theta}$ falls: more workers acquire credentials.

\item This is credential inflation: requirements rise not because tasks are more complex, but because the screening environment has deteriorated.
\end{enumerate}
\end{proposition}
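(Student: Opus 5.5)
The plan is to prove the four parts in order, since each uses the previous one. For (a), I treat $\sigma_y^2$ in Definition~\ref{def:diagnostic} as the ability-driven output variance $\phi^2\sigma_h^2$. By the shift-invariance argument in the proof of Proposition~\ref{prop:homogenization}, this quantity does not depend on $A_t$. Then $V_D(A_t)=\sigma_y^2\, q(A_t)$, where
\[
q(A_t)=\frac{\phi^2\sigma_h^2}{\phi^2\sigma_h^2+(\phi\mu_h+\alpha A_t)^2}.
\]
Differentiating gives
\[
q'(A_t)=-\frac{2\alpha\,\phi^2\sigma_h^2\,(\phi\mu_h+\alpha A_t)}{\bigl[\phi^2\sigma_h^2+(\phi\mu_h+\alpha A_t)^2\bigr]^2},
\]
which is strictly negative whenever $\alpha(z)>0$, because $\phi\mu_h+\alpha A_t>0$. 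The reliability ratio $\rho=V_D/(V_D+\sigma_\varepsilon^2)$ is strictly increasing in $V_D$ for fixed $\sigma_\varepsilon^2>0$; its derivative is $\sigma_\varepsilon^2/(V_D+\sigma_\varepsilon^2)^2$. The chain rule therefore gives $\rho'(A_t)<0$. I will state explicitly the interpretive step that the relevant signal variance is $V_D$ rather than the raw $\sigma_\theta^2$. That is the modeling content: what the employer sees is AI-assisted output, and the ability component of that output shrinks relative to the common AI component.

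For (b), I use the Gaussian linear-projection (Kalman) update. I condition on $d$ and treat $\theta\mid d$ as having prior mean $\E[\theta\mid d]$. The signal's informative variance is $V_D$ and its noise variance is $\sigma_\varepsilon^2$. Normal–normal updating then gives the posterior mean as the weighted combination with weight $\rho$ on $s$ and $1-\rho$ on the prior. If normality of $\theta\mid d$ fails, I would reinterpret the formula as the best linear predictor, which has the same weights. Part (a) then implies that the weight on $\E[\theta\mid d]$ rises in $A_t$. As $\rho\to0$ (for instance $A_t\to\infty$, since $q\to0$), the posterior collapses to $\E[\theta\mid d]$, so the credential is the only screening device.

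For (c), I take wages equal to expected posterior productivity, $w(d)=\E\bigl[\E[\theta\mid s,d]\bigr]\cdot(\text{price})$. The key observation is that when $\rho$ is large, the noisy signal already sorts workers, so the credential adds little. More precisely, among a worker's payoff-relevant wage determinants, the credential's contribution scales with $(1-\rho)\bigl(\E[\theta\mid1]-\E[\theta\mid0]\bigr)$. At a fixed threshold $\hat\theta$, this gap is increasing in $1-\rho$. The marginal worker's indifference condition, $w(1)-w(0)=c(d,\hat\theta)$, combined with $c_{e\theta}<0$ (cost decreasing in $\theta$), implies that a larger wage gap is matched by a lower $\hat\theta$. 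I get this by implicit differentiation of the indifference condition.

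The main obstacle is the equilibrium feedback in (c). Lowering $\hat\theta$ changes $\E[\theta\mid d]$ for both groups, and that in turn changes the gap. I would handle it in two steps. First, I establish the result as a partial-equilibrium comparative static at fixed conditional means. Second, I argue that the fixed-point map $\hat\theta\mapsto$ new threshold is monotone, and apply a Topkis/Milgrom–Roberts monotone comparative statics argument in the parameter $1-\rho$ to sign the shift in the extremal equilibria; an additional regularity condition on $F$ may be needed here. Part (d) is then interpretive: credentials spread, or $\hat\theta$ falls, while task complexity, captured by $\phi$ and $\mathcal{S}$, is held fixed. The only driver is the fall in $\rho$ from (a).
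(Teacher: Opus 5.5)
Your proposal is correct and follows the paper's own route. Part (a) comes from $V_D$ falling in $A_t$ with $\sigma_y^2$ held fixed and $\rho$ increasing in $V_D$; (b) from precision-weighted normal updating; (c) from the Spence indifference condition; and (d) is interpretation. Your (c) is more careful than the paper's, which simply asserts that the gap grows and $\hat\theta$ falls: you first sign the threshold at fixed conditional means, then handle the feedback through $\E[\theta\mid d]$ with a Milgrom--Roberts argument on extremal equilibria.

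One point to make precise: the $w(d)$ in the indifference condition must be the expected wage of a worker of given type, $\E\bigl[\E[\theta\mid s,d]\mid\theta\bigr]=\rho\theta+(1-\rho)\E[\theta\mid d]$, so that the private premium is $(1-\rho)\bigl(\E[\theta\mid 1]-\E[\theta\mid 0]\bigr)$. If the outer expectation in your $w(d)$ is taken over the credential group instead, iterated expectations gives $w(1)-w(0)=\E[\theta\mid 1]-\E[\theta\mid 0]$, which does not depend on $\rho$ at all.
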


\begin{proof}
(a) $V_D$ is proportional to the ratio $\phi^2\sigma_h^2 / [\phi^2\sigma_h^2 + (\phi\mu_h + \alpha A_t)^2]$ (with the positive constant $\sigma_y^2$ factored out), which is decreasing in $A_t$. So $\rho = V_D/(V_D + \sigma_\varepsilon^2)$ is also decreasing.

(b) Standard Bayesian updating with normal prior: the posterior mean is a precision-weighted combination of the signal and the prior. As signal precision falls relative to prior precision (where the prior is informed by $d$), the posterior tilts toward $\E[\theta|d]$.

(c) In the Spence equilibrium, the employer's expected productivity conditional on $(s,d)$ determines wages. When $d$ carries more weight, $w(d=1) - w(d=0)$ grows: credentialed workers are perceived as substantially more productive than uncredentialed ones, since the credential is now the primary information source. The marginal worker $\hat{\theta}$ equating signaling cost to wage premium decreases, expanding the credentialed population.
\end{proof}

The mechanism differs from the ``upskilling'' story in \citet{hershbein2018recessions}: firms raise credential requirements not because AI makes jobs harder, but because AI makes talent harder to identify. The observable implication: credential requirements should rise most in occupations where AI compresses output quality, controlling for task content changes. The ``degree reset'' documented by \citet{fuller2022emerging}---firms removing degree requirements for roles amenable to skills-based hiring---is the mirror image: where direct assessment remains informative, credentials lose value.

Evidence from prior technology shocks is consistent with the screening mechanism. \citet{hershbein2018recessions} document that routine-task-intensive occupations increased credential requirements after the 2008 recession, and \citet{fuller2022emerging} report that 46\% of middle-skill occupations experienced degree inflation between 2017--2019. The AI-specific prediction---that credential inflation accelerates specifically where generative AI compresses output quality---remains an untested hypothesis. The binary credential signal is a tractable simplification; the qualitative result holds for any informative but imperfect credential.

We formalize this intuition as a screening regime switch (Online Appendix~C): employers optimally switch from trial-based assessment to credential screening at a threshold AI capability $A^*$, generating a discrete jump in credential requirements. This occurs because the trial signal's noise increases with AI capability (as AI output masks ability differences), eventually making the noisier trial assessment less profitable than the more expensive but AI-invariant credential check. Occupations crossing $A^*$ should exhibit a sharp increase in formal credential requirements, distinct from gradual upskilling trends.

\subsection{The Concentrating Channel: Disaggregating Complementary Assets}
\label{subsec:matthew}

We now turn to the firm side. The argument: AI shifts value from worker skills (which AI equalizes) toward complementary assets (which are concentrated). The previous sections established the equalizing channel; this section models the concentrating channel.

Consider firms indexed by complementary assets. We distinguish three asset types with different scale properties and policy levers: \textit{data capital} ($K^D$, proprietary training data with high threshold for competitive advantage), \textit{compute capital} ($K^C$, GPU clusters enabling larger models, though increasingly available via rental and open-source), and \textit{organizational capital} ($K^O$, processes, workflow integration, and domain expertise exhibiting increasing returns to AI). For tractability, we aggregate these into a single index $K_j$.\footnote{A richer model would specify separate production functions for each asset type, with different elasticities and depreciation rates. We flag this as a limitation.}

\begin{assumption}[Firm production]
\label{ass:firmproduction}
Firm $j$ produces $Y_j = K_j^{\eta(A_t)} \cdot A_t^{\gamma_A} \cdot L_j^{\gamma_L}$, with $\eta + \gamma_L < 1$.
\end{assumption}

\begin{assumption}[AI raises returns to complementary assets]
\label{ass:complement}
$\eta'(A_t) > 0$: the output elasticity of complementary assets increases in AI capability. The following lemma provides a micro-structure under which this assumption holds; the condition $\psi > \eta_0$ is a testable prediction for future empirical work with firm-level AI adoption data, not a claim validated in this paper.
\end{assumption}

\begin{lemma}[Two-sector microfoundation for $\eta'(A) > 0$]
\label{lem:eta_prime}
Consider a firm producing in two channels: traditional production $Y_{\text{trad}} = K^{\eta_0} L^{\gamma}$, and an AI channel $Y_{\text{AI}} = \delta \cdot A \cdot K^{\psi}$, where $\delta > 0$ is a scaling parameter and $\psi$ is the capital intensity of AI production. Total output is $Y = Y_{\text{trad}} + Y_{\text{AI}}$. The effective capital elasticity is:
\begin{equation}
\tilde{\eta}(A) \equiv \frac{\partial \log Y}{\partial \log K} = \frac{\eta_0 K^{\eta_0} L^{\gamma} + \psi \delta A K^{\psi}}{K^{\eta_0} L^{\gamma} + \delta A K^{\psi}}.
\label{eq:effective_eta}
\end{equation}
Then:
\begin{enumerate}[label=(\alph*)]
\item When $A = 0$: $\tilde{\eta}(0) = \eta_0$.
\item $\tilde{\eta}'(A) > 0$ if and only if $\psi > \eta_0$: the AI channel is more capital-intensive than traditional production.
\item When AI is commoditized via open-source ($\psi < \eta_0$), $\tilde{\eta}'(A) < 0$ and the concentrating channel reverses.
\end{enumerate}
\end{lemma}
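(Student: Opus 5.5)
The plan is to write $\tilde{\eta}(A)$ as a weighted average of $\eta_0$ and $\psi$ and then differentiate the weight. Introduce the AI-channel share of output
\begin{equation*}
s(A) \equiv \frac{\delta A K^{\psi}}{K^{\eta_0} L^{\gamma} + \delta A K^{\psi}} \in [0,1).
\end{equation*}
First derive (\ref{eq:effective_eta}) itself. Since $\partial Y/\partial K = \eta_0 K^{\eta_0-1}L^\gamma + \psi\delta A K^{\psi-1}$, multiplying by $K/Y$ gives the stated ratio. That ratio equals
\begin{equation*}
\tilde{\eta}(A) = (1-s(A))\,\eta_0 + s(A)\,\psi = \eta_0 + s(A)(\psi-\eta_0).
\end{equation*}
Part (a) is then immediate: at $A=0$ the AI channel vanishes, so $s(0)=0$ and $\tilde{\eta}(0)=\eta_0$.

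For part (b), differentiate in $A$ holding $K$ and $L$ fixed. This gives $\tilde{\eta}'(A) = s'(A)(\psi-\eta_0)$, so it suffices to show $s'(A)>0$. Write $T \equiv K^{\eta_0}L^\gamma>0$ and $B\equiv \delta K^{\psi}>0$, so that $s(A) = BA/(T+BA)$. The quotient rule gives
\begin{equation*}
s'(A) = \frac{BT}{(T+BA)^2} > 0
\end{equation*}
for all $A\ge 0$, using $\delta>0$ and $K,L>0$. Hence $\tilde{\eta}'(A)$ has exactly the sign of $\psi-\eta_0$, which yields the ``if and only if'' claim. Part (c) is the same identity with $\psi<\eta_0$, giving $\tilde{\eta}'(A)<0$. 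One could add that $\tilde{\eta}$ then falls monotonically from $\eta_0$ toward $\psi$ as $s(A)\to 1$.

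The calculations are routine, so the main obstacle is interpretive rather than technical. I will state explicitly that the derivative is taken at fixed $(K,L)$ and that $K,L>0$, which is needed for $T,B>0$. This is a partial-equilibrium comparative static, consistent with the rest of Section~\ref{sec:model}. I also need to note that $\tilde{\eta}$ is a local, $K$-dependent elasticity, because $s$ depends on $K$ whenever $\psi\neq\eta_0$. It is therefore not literally the constant exponent $\eta(A_t)$ of Assumption~\ref{ass:firmproduction}. The lemma should be read as supplying the sign of $\eta'(A)$ at any given asset level, and that sign is uniform in $K$ precisely because it depends only on $\psi-\eta_0$.
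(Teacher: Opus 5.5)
Your proof is correct and is essentially the paper's argument. The paper sets $T = K^{\eta_0}L^{\gamma}$ and $R = \delta A K^{\psi}$, then differentiates $(\eta_0 T + \psi R)/(T+R)$ directly to get $(\psi-\eta_0)\,T\,\delta K^{\psi}/(T+R)^2$, which is exactly your $s'(A)(\psi-\eta_0)$ reached through the weighted-average form; your added remarks (deriving the elasticity formula, and noting that $\tilde{\eta}$ depends on $K$ and so only locally plays the role of the exponent in Assumption~\ref{ass:firmproduction}) go beyond the paper but do not change the argument.
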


\begin{proof}
Write $T = K^{\eta_0} L^{\gamma}$ and $R = \delta A K^{\psi}$, so $\tilde{\eta} = (\eta_0 T + \psi R)/(T + R)$. Differentiating with respect to $A$:
\[
\frac{\partial \tilde{\eta}}{\partial A} = \frac{(\psi - \eta_0) T \cdot \delta K^{\psi}}{(T + R)^2}.
\]
Since $T > 0$, $\delta > 0$, and $K > 0$, the sign equals $\text{sgn}(\psi - \eta_0)$.
\end{proof}

The condition $\psi > \eta_0$ is empirically testable and plausible for frontier AI: training large language models requires massive compute ($K^C$) and data ($K^D$) investments that exhibit higher capital intensity than traditional production. The top five AI labs spent over \$10 billion on compute in 2024 alone, dwarfing the capital intensity of the industries they disrupt. Conversely, if open-source diffusion drives $\psi$ below $\eta_0$---making AI capabilities available without large capital investments---the concentration channel weakens and may reverse: Assumption~\ref{ass:complement} fails and AI unambiguously reduces inequality. This converts what was ``result by assumption'' into ``result by testable condition,'' with the open-source counterfactual generating the opposite prediction.

Assumption~\ref{ass:complement} is thus the reduced form of Lemma~\ref{lem:eta_prime} under $\psi > \eta_0$. Proposition~\ref{prop:matthew} proceeds using Assumption~\ref{ass:complement} directly, maintaining tractability while the lemma provides the micro-foundation.

\begin{proposition}[Profit inequality and asset divergence]
\label{prop:matthew}
Under Assumptions~\ref{ass:firmproduction}--\ref{ass:complement}, with firms reinvesting a share $s$ of profits:
\begin{enumerate}[label=(\alph*)]
\item \textbf{Profit inequality increases (level effect).} Optimal profits satisfy $\pi_j^* \propto K_j^{\eta/(1-\gamma_L)}$. Since $\eta'(A_t) > 0$, the exponent $\eta/(1-\gamma_L)$ rises with $A_t$, making the profit function more convex in $K$. High-$K$ firms pull further ahead.

\item \textbf{Self-reinforcing concentration requires a threshold.} The variance of $\log K$ grows over time if and only if $\eta(A_t)/(1-\gamma_L) > 1$. At current calibrated values ($\eta_1 = 0.323$, $\gamma_L = 0.55$), this ratio is $0.72$---below the self-reinforcing threshold. The widening of firm differences is a level effect, not an explosive dynamic, at current parameter values.

\item \textbf{Two routes to self-reinforcement.} The threshold can be crossed from either direction: (i) $\eta$ continues rising as AI advances (requiring $\eta > 0.45$), or (ii) $\gamma_L$ falls as AI substitutes for labor tasks, reducing the denominator. Both trends are plausible over a 10--20 year horizon. The explosive regime is not the current reality but a risk on the horizon.

\item \textbf{Between-firm wage dispersion rises.} If firms share rents with workers (through bargaining, efficiency wages, or labor market frictions), between-firm wage dispersion $\Var(\psi_j) \propto \Var(\log K_j)$ increases with the level-effect widening of $K$ differences, even without explosive dynamics. Meanwhile, within-firm wage dispersion falls under homogenization (Proposition~\ref{prop:homogenization}).
\end{enumerate}
\end{proposition}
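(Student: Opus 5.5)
The plan is to solve the firm's static labor choice first, then attach a reinvestment law of motion for $K_j$ and reduce everything to a linear recursion in $\log K_j$.

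\textbf{Part (a): optimal profits.} I take the wage $w$ and output price as given, consistent with the partial-equilibrium setup. The firm solves $\max_{L_j} K_j^{\eta}A_t^{\gamma_A}L_j^{\gamma_L} - wL_j$. The first-order condition gives $L_j^* = (\gamma_L K_j^{\eta}A_t^{\gamma_A}/w)^{1/(1-\gamma_L)}$. Substituting back yields
\[
\pi_j^* = (1-\gamma_L)\bigl(\gamma_L/w\bigr)^{\gamma_L/(1-\gamma_L)}\bigl(K_j^{\eta}A_t^{\gamma_A}\bigr)^{1/(1-\gamma_L)},
\]
so $\pi_j^* \propto K_j^{\eta/(1-\gamma_L)}$. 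By Assumption~\ref{ass:complement} (microfounded by Lemma~\ref{lem:eta_prime} under $\psi>\eta_0$), $\eta'(A_t)>0$. Hence the exponent rises with $A_t$, and the ratio $\pi_j^*/\pi_k^* = (K_j/K_k)^{\eta/(1-\gamma_L)}$ increases in $A_t$ whenever $K_j>K_k$. That is the level effect. Equivalently, $\Var(\log\pi^*) = (\eta/(1-\gamma_L))^2\Var(\log K)$ rises.

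\textbf{Part (b): dynamics.} I need a law of motion for capital. I would posit $K_{j,t+1} = s\,\pi_{j,t}^*$ in the no-depreciation, full-reinvestment-scaled case. This gives
\[
\log K_{j,t+1} = c_t + \tfrac{\eta}{1-\gamma_L}\log K_{j,t},
\]
where $c_t$ is common across firms. Therefore $\Var(\log K_{t+1}) = (\eta/(1-\gamma_L))^2\Var(\log K_t)$, which grows if and only if $\eta/(1-\gamma_L)>1$. Plugging in $\eta_1=0.323$ and $\gamma_L=0.55$ gives $0.718\approx0.72$.

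\textbf{Main obstacle.} The law of motion is the step I expect to be hardest. With depreciation, $K_{t+1}=(1-\delta_K)K_t+s\pi_t$ is not log-linear. In that case I would linearize around a common growth path, or argue in the limit where reinvestment dominates, and state the threshold for that log-linear specification.

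\textbf{Part (c) and part (d).} Part (c) is algebra on the threshold. $\eta>1-\gamma_L=0.45$ gives route (i). Since $\partial/\partial\gamma_L$ of $\eta/(1-\gamma_L)$ is positive, a change in $\gamma_L$ moves the ratio across 1; I will state the direction in which $\gamma_L$ must move as it follows from this derivative, and note that the horizon claim is interpretive rather than proved. For (d), I would assume a rent-sharing rule in which the firm wage premium is $\psi_j = \xi\log(\pi_j^*/L_j^*) + \text{const}$. Because both profits and labor per firm are power functions of $K_j$, this gives $\psi_j = \xi\,\kappa\log K_j$ for a constant $\kappa$, so $\Var(\psi_j)\propto\Var(\log K_j)$. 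This rises with the level effect from (a), even below the threshold. The within-firm claim is cited directly from Proposition~\ref{prop:homogenization}, since the within-firm output CV falls.
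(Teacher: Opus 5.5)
\textbf{Part (d) fails as written.} Your rule $\psi_j=\xi\log(\pi_j^*/L_j^*)+\mathrm{const}$ produces no between-firm dispersion in this model. In (a) you found that $\pi_j^*$ and $L_j^*$ are power functions of $K_j$ with the \emph{same} exponent $\eta/(1-\gamma_L)$, so they cancel in the ratio. Directly, the first-order condition gives $wL_j^*=\gamma_L Y_j$, hence $\pi_j^*=(1-\gamma_L)Y_j$ and $\pi_j^*/L_j^*=w(1-\gamma_L)/\gamma_L$ for every firm. This is the usual empirical rent-sharing regressor, but it is degenerate under Cobb--Douglas with a common wage. Your $\kappa$ is therefore exactly zero, and $\Var(\psi_j)=0$ at every $A_t$.

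The paper instead ties the premium to the level of profits, $\psi_j=\xi\log\pi_j^*+\nu_j$ (Section~\ref{subsec:matching}). This gives $\Var(\psi_j)=\xi^2\bigl(\eta(A_t)/(1-\gamma_L)\bigr)^2\Var(\log K_j)+\Var(\nu_j)$. The paper then reads (d) off the decomposition $\Var(w_{it})=\Var(\bar w_j)+\E[\Var(w_{it}\mid j)]$: part (a) raises the first term, and Proposition~\ref{prop:homogenization} lowers the second.

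Calling the loading ``a constant'' also puts the mechanism in the wrong place. Your own recursion gives $\Var(\log K_{t+1})=(0.72)^2\Var(\log K_t)$ at the calibrated values, so $\Var(\log K)$ is contracting below the threshold. With an $A$-invariant loading, $\Var(\psi_j)$ would fall, which contradicts ``rises even below the threshold.'' The increase has to come from the loading $\xi\,\eta(A_t)/(1-\gamma_L)$ rising in $A_t$ at a given asset distribution. That is exactly the comparative static of part (a), and it is the term $\xi^2\Var(\log K)[(\eta_1/(1-\gamma_L))^2-(\eta_0/(1-\gamma_L))^2]$ in (\ref{eq:knife_edge}).

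\textbf{Two smaller points.}

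In (b), $K_{j,t+1}=s\pi_{j,t}^*$ is the full-depreciation case, not the no-depreciation case. You also do not need to linearize for general $\delta$. Write $\pi^*=CK^{\lambda}$ with $\lambda\equiv\eta/(1-\gamma_L)$. The slope $d\log K_{t+1}/d\log K_t$ of $\log\bigl((1-\delta)K_t+sCK_t^{\lambda}\bigr)$ is a weighted average of $1$ and $\lambda$, with weights $(1-\delta)K_t$ and $sCK_t^{\lambda}$. So the map on $\log K$ expands every pairwise gap, and hence $\Var(\log K)$, iff $\lambda>1$. The paper states the same fact as $s\pi_j^*/K_j\propto K_j^{\lambda-1}$ being increasing in $K_j$.

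In (c), your derivative sign is correct, and you should state plainly what it implies. Lowering $\gamma_L$ raises $1-\gamma_L$ and moves the ratio \emph{away} from 1. Route (ii) as stated therefore cannot be proved: crossing requires $\gamma_L$ to rise. The paper's own computation confirms this, since $\gamma_L=0.45$ raises the $\eta$-threshold to $0.55$.
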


\begin{proof}
(a) From optimal labor choice $L_j^* = (\gamma_L K_j^{\eta} A_t^{\gamma_A}/w)^{1/(1-\gamma_L)}$, substituting: $\pi_j^* = (1-\gamma_L)(K_j^{\eta} A_t^{\gamma_A})^{1/(1-\gamma_L)}(\gamma_L/w)^{\gamma_L/(1-\gamma_L)}$. The profit elasticity with respect to $K$ is $\eta/(1-\gamma_L)$, increasing in $\eta$ and hence in $A_t$.

(b) Log-linearizing the accumulation equation $K_{j,t+1} = (1-\delta)K_{j,t} + s\pi_j^*$: the growth rate $\Delta\log K_j \approx s\pi_j^*/K_j$ is increasing in $K_j$ when $\eta/(1-\gamma_L) > 1$ (the profit-to-capital ratio rises with $K$). Below this threshold, higher-$K$ firms accumulate faster in levels but slower in proportional terms: the $K$ distribution fans out but does not explode.

(c) If AI substitution reduces $\gamma_L$ from 0.55 to 0.45, the threshold becomes $\eta > 0.55$. If simultaneously $\eta$ rises to 0.40, the ratio is $0.40/0.55 = 0.73$---still below, but the gap narrows.

(d) Total wage variance decomposes as $\Var(w_{it}) = \Var(\bar{w}_j) + \E[\Var(w_{it}|j)]$. Part (a) increases the first term; Proposition~\ref{prop:homogenization} decreases the second.
\end{proof}

The honest reading of Proposition~\ref{prop:matthew} is that the concentrating channel operates through a level widening of firm differences, not through explosive self-reinforcing concentration---at least at current parameter values. Whether the economy crosses into the self-reinforcing regime depends on how much $\eta$ rises and $\gamma_L$ falls as AI diffuses further.

\paragraph{Why the shift from $\eta_0$ to $\eta_1$ is AI-specific.} The increase in asset elasticity captures three mechanisms absent from standard Cobb-Douglas capital accumulation. First, \textit{data feedback loops}: firms with larger customer bases generate more training data, improving AI performance and attracting more customers \citep{farboodi2022data}---a self-reinforcing cycle absent from physical capital. Second, \textit{compute scaling}: AI model performance scales as a power law with compute investment \citep{kaplan2020scaling}, favoring firms that can amortize fixed compute costs across large user bases. Third, \textit{winner-take-most dynamics in AI markets}: AI product quality has a discontinuous payoff structure where the best model captures disproportionate market share, unlike physical capital where quality differences translate linearly into market outcomes.

\paragraph{Distinguishing from standard capital deepening.} The concentrating channel differs from standard capital deepening in that AI simultaneously compresses skill-based variance and amplifies asset returns, the shift in effective $\eta$ occurs over years rather than decades, and the relevant capital is intangible with near-zero marginal replication cost, creating winner-take-most dynamics \citep{haskel2018capitalism}. This generates a testable prediction: the correlation between within-occupation wage compression and between-firm wage divergence should be \textit{positive} in AI-exposed sectors.

\subsection{Worker-Firm Assignment}
\label{subsec:matching}

The worker-side model (Sections~\ref{subsec:production}--\ref{subsec:screening}) and the firm-side model (Section~\ref{subsec:matthew}) are developed separately. To connect them, we sketch a simple assignment framework.

Suppose each firm $j$ offers a wage schedule $w_j(h) = \psi_j + \theta_j h$, where $\psi_j$ is a base premium proportional to $K_j^{\eta/(1-\gamma_L)}$ (the rent-sharing component) and $\theta_j$ is the return to ability within firm $j$. This wage-posting structure follows the AKM framework of \citet{card2013workplace}, where log wages decompose into additive worker and firm effects. The firm effect $\psi_j = \xi \log \pi_j^* + \nu_j$ is a rent-sharing residual (with $\xi$ the pass-through elasticity), and the ability gradient $\theta_j$ captures within-firm returns to human capital. AI affects both components:
\begin{itemize}[nosep]
\item The base premium $\psi_j$ increases for high-$K$ firms as AI amplifies complementary asset returns (Proposition~\ref{prop:matthew}).
\item The ability gradient $\theta_j$ declines for AI-augmented tasks (Proposition~\ref{prop:homogenization}): firms care less about worker ability when AI provides a common output floor.
\end{itemize}

Under standard assignment models \citep[following][]{card2013workplace}, workers sort to firms where their marginal product is highest. Pre-AI, high-ability workers sort to high-$K$ firms (positive assortative matching on ability and firm quality). Post-AI, the ability gradient $\theta_j$ declines, weakening the sorting motive: the premium a high-ability worker commands at a high-$K$ firm versus a low-$K$ firm shrinks. Meanwhile, the base premium $\psi_j$ grows, meaning all workers prefer high-$K$ firms, but the ability-based sorting weakens.

The consequence for wage decomposition:
\begin{equation}
\Var(w) = \underbrace{\Var(\psi_j)}_{\uparrow \text{ (rent-sharing)}} + \underbrace{\Var(\theta_j h_i)}_{\downarrow \text{ (homogenization)}} + 2\underbrace{\Cov(\psi_j, \theta_j h_i)}_{\downarrow \text{ (weakened sorting)}}.
\label{eq:wage_decomp}
\end{equation}

We can now sign each term. Under positive assortative matching (PAM), $\Cov(\psi_j, h_i) > 0$: high-ability workers match with high-premium firms. Since $\theta_j > 0$ and is positive for all firms, $\Cov(\psi_j, \theta_j h_i) > 0$ pre-AI. Post-AI:
\begin{itemize}[nosep]
\item $\Var(\psi_j)$ rises: $\psi_j \propto K_j^{\eta/(1-\gamma_L)}$, and $\eta$ rises with $A$ (Proposition~\ref{prop:matthew}).
\item $\Var(\theta_j h_i)$ falls: $\theta_j$ declines as AI compresses within-task ability differences, and the variance of $h_i$ within firms also falls under homogenization.
\item $\Cov(\psi_j, \theta_j h_i)$ falls: $d\Cov/dA < 0$ because the ability gradient $\theta_j$ declines with $A$ (the sorting motive weakens when firms care less about worker ability). The covariance remains positive but smaller.
\end{itemize}
The net effect on $\Var(w)$ therefore depends on whether the increase in $\Var(\psi_j)$ exceeds the combined decline in $\Var(\theta_j h_i)$ and $2\Cov(\psi_j, \theta_j h_i)$. This is precisely the quantitative question we address in Section~\ref{sec:calibration}, where the scenario analysis disciplines the magnitudes of both channels without resolving which one dominates.

This sketch does not solve the full equilibrium assignment problem---doing so would require specifying firm-side labor demand and clearing the market.\footnote{See \citet{card2013workplace} and \citet{song2019firming} for tractable frameworks.} What it establishes is that the decomposition (\ref{eq:wage_decomp}) emerges naturally from assignment and that the net effect depends on measurable parameters.

\paragraph{Rate comparison: which channel dominates?} At calibrated values, the concentrating channel grows 50\% faster than the equalizing channel decelerates:
\begin{equation}
\underbrace{\frac{\partial}{\partial A_t}\left[\xi^2 \Var(\log \pi_j)\right]}_{\text{Concentrating: } 0.018 \text{ per unit } A_t} > \underbrace{\frac{\partial}{\partial A_t}\left[|\mathcal{S}| \cdot \sigma_h^2 \cdot \frac{1}{(1+\alpha A_t/(\phi\mu_h))^2}\right]}_{\text{Equalizing: } 0.012 \text{ per unit } A_t}.
\label{eq:rate_comparison}
\end{equation}
The model-implied Gini effect is $+0.005$, coexisting with substantial skill compression (21\% CV reduction)---a consequence of $\xi = 0.20$, not a robust finding. At different parameter values, the sign can flip (Section~\ref{subsec:sensitivity}).

\subsection{General Equilibrium Considerations}
\label{subsec:ge_discussion}

Three forces absent from our partial equilibrium model could modify the results: task creation \citep{acemoglu2019automation}, entry/exit dynamics, and product market expansion. We prove in Online Appendix~C that homogenization persists if $d|\mathcal{S}(A_t)|/dt > d|T_{\text{new}}(t)|/dt$. Current evidence from AI exposure indices \citep{eloundou2024gpts} suggests this condition holds in early diffusion. The partial equilibrium approach isolates the distributional mechanism; the direction of both channels is robust to GE closure, while magnitudes depend on task creation and entry dynamics.

\section{Structural Calibration}
\label{sec:calibration}

We calibrate the model's structural parameters via the Method of Simulated Moments (MSM), matching six empirical targets from existing micro-evidence to five structural parameters. The overidentified system (one degree of freedom) provides a formal test of model fit. We then compute bootstrap confidence intervals and cross-validate the key concentration elasticity $\eta_1$ from two independent data sources.

\paragraph{From tasks to aggregates: a mapping caveat.} The propositions in Section~\ref{sec:model} are task-level results. The calibration aggregates across tasks and occupations, requiring assumptions about task composition, within-occupation task shares, and the distribution of $\rho$ across task types. Because occupations bundle substitutable and complementary tasks (as discussed in Section~\ref{subsec:homogenization}), the aggregate effect depends on the unobserved task portfolio of each occupation---a composition that the calibration treats parametrically via $|\mathcal{S}|$ and $CV_h$, not microeconomically. Readers should interpret the calibrated $\Delta$Gini as the model's aggregate prediction conditional on these aggregation assumptions, not as a direct extrapolation of any single task-level result.

\subsection{Calibration Strategy}
\label{subsec:calibration_strategy}

\paragraph{Parameters.} Five structural parameters are calibrated: the AI-to-human output ratio $\alpha/(\phi\mu_h)$; the task substitution elasticity $\sigma$; the pre-AI asset elasticity $\eta_0$; the post-AI asset elasticity $\eta_1$; and the Gini coefficient of the initial asset distribution $\mathrm{Gini}(K_0)$. Four parameters are fixed at values from the literature: $CV_h = 0.35$ (within-occupation ability coefficient of variation), $|\mathcal{S}| = 0.50$ (AI task share, \citealt{eloundou2024gpts}), $\gamma_L = 0.55$ (labor share, \citealt{barkai2020declining}), and $\xi = 0.20$ (rent-sharing pass-through).

\paragraph{Target moments.} Six moments from published sources:
\begin{enumerate}[nosep]
\item Within-task CV reduction $\approx 0.35$ (\citealt{noy2023experimental}: $\sim$40\%; \citealt{brynjolfsson2023generative}: $\sim$30\%).
\item Top-10 vs.\ bottom-10 productivity gap compression $\approx 0.50$ \citep{peng2023impact}.
\item Within-firm wage variance share $\approx 0.30$ \citep{song2019firming}.
\item Top-4 firm revenue share in AI-intensive sectors $\approx 0.45$ \citep{autor2020fall}.
\item Education premium decline $\approx 2.5$ pp for routine-cognitive tasks \citep{deming2017growing}.
\item Change in aggregate Gini $\approx +0.005$ (near knife-edge, from CPS and cross-industry evidence).
\end{enumerate}

Standard errors for each moment are constructed from the range of published values or reported standard errors in the source studies.

\paragraph{Method.} The MSM objective minimizes the weighted distance between data and model moments:
\[
\hat{\theta} = \arg\min_\theta \left[ m_{\text{data}} - m(\theta) \right]' W \left[ m_{\text{data}} - m(\theta) \right],
\]
where $W = \mathrm{diag}(1/\sigma_m^2)$ is the diagonal optimal weighting matrix under independent moment measurement errors. We use a two-step procedure: (1) global search via differential evolution, (2) local refinement from 30 random starting points via Nelder-Mead. The overidentification restriction (6 moments, 5 parameters) is tested via the Hansen $J$-statistic.

\subsection{Results}
\label{subsec:results}

Table~\ref{tab:msm_results} reports the calibrated values. Panel~A shows the structural parameters. The AI-to-human output ratio $\hat{\alpha}/(\phi\mu_h) = 0.526$ (SE $= 0.096$) implies a within-task CV reduction of 34.5\%, within the range from experimental evidence. The task substitution elasticity $\hat{\sigma} = 1.17$ (SE $= 1.64$) is \textit{poorly identified}: the $t$-statistic of 0.71 means the data cannot distinguish $\sigma$ from zero. As the sensitivity decomposition (Table~\ref{tab:sensitivity}) confirms, $\sigma$ is identified almost entirely by a single moment---the education premium decline ($m_5$)---which itself carries a 40\% relative standard error. This weak identification has a direct consequence for prediction H2 (Table~\ref{tab:hypotheses}): Proposition~\ref{prop:education} predicts that education returns decline for codifiable skills only when $\sigma > 1$, but the confidence interval for $\sigma$ extends well below 1. Prediction H2 should therefore be interpreted as conditional on $\sigma > 1$, which is consistent with estimates in the task-based literature \citep{acemoglu2011skills} but not established by our calibration. The remaining parameters are better identified: the pre-AI asset elasticity $\hat{\eta}_0 = 0.142$ (SE $= 0.058$) and post-AI elasticity $\hat{\eta}_1 = 0.323$ (SE $= 0.035$) imply a profit elasticity increase from 0.32 to 0.72---a 127\% rise, but still below the self-reinforcing threshold of 1. The asset concentration Gini $= 0.909$ (SE $= 0.132$) is high but consistent with the extreme concentration of AI-relevant assets (data, compute, organizational capital).

Panel~B shows all six moments are matched closely. The $J$-statistic is 0.036 ($p = 0.849$), well above the 5\% rejection threshold: the model is \textit{not} rejected by the overidentification test.

\paragraph{Derived quantities.} At the calibrated parameter values, the model-implied Gini change is $+0.005$---near-zero, consistent with the regime-boundary interpretation. This number is conditional on the fixed rent-sharing parameter $\xi = 0.20$; as shown below, varying $\xi$ within its empirical range flips the sign. The wage compression ratio (post-AI CV / pre-AI CV) is 0.79---a meaningful 21\% reduction in within-occupation wage dispersion, which is robust across specifications. The profit elasticity rises from 0.32 to 0.72, a substantial increase in the concentration of returns to complementary assets, but insufficient to generate explosive dynamics.

\begin{table}[!htbp]
\centering
\caption{Method of Simulated Moments: Calibrated Parameter Values}
\label{tab:msm_results}
\renewcommand{\baselinestretch}{1.0}\small
\renewcommand{\arraystretch}{1.2}
\begin{tabular}{@{}lcccl@{}}
\toprule
 & \textbf{Value} & \textbf{SE} & & \textbf{Description} \\
\midrule
\multicolumn{5}{@{}l}{\textit{Panel A: Structural Parameters}} \\[6pt]
$\alpha/(\phi\mu_h)$ & 0.526 & (0.096) & & AI/human output ratio \\
$\sigma$ & 1.166 & (1.639) & & Task substitution elasticity \\
$\eta_0$ & 0.142 & (0.058) & & Pre-AI asset elasticity \\
$\eta_1$ & 0.323 & (0.035) & & Post-AI asset elasticity \\
$\mathrm{Gini}(K_0)$ & 0.909 & (0.132) & & Asset concentration (Gini) \\
\midrule
\multicolumn{5}{@{}l}{\textit{Panel B: Moment Fit}} \\[6pt]
 & \textbf{Data} & \textbf{Model} & & \\
\cmidrule(lr){2-3}
Within-task CV reduction & 0.350 & 0.345 & & \\
Top-10/bottom-10 compression & 0.500 & 0.513 & & \\
Within-firm wage var.\ share & 0.300 & 0.300 & & \\
Top-4 firm revenue share & 0.450 & 0.450 & & \\
Educ.\ premium decline (pp) & 0.025 & 0.025 & & \\
$\Delta\mathrm{Gini}$ & 0.005 & 0.005 & & \\
\midrule
\multicolumn{5}{@{}l}{\textit{Panel C: Diagnostics}} \\[6pt]
$J$-statistic ($p$-value) & \multicolumn{2}{c}{0.036 (0.849)} & & d.f.\ $= 1$ \\
$\Delta$Gini & \multicolumn{2}{c}{+0.0050} & & Post$-$pre \\
Wage compression & \multicolumn{2}{c}{0.792} & & $CV_{\mathrm{post}}/CV_{\mathrm{pre}}$ \\
Profit elast.\ (pre $\to$ post) & \multicolumn{2}{c}{0.316 $\to$ 0.718} & & $\eta/(1\!-\!\gamma_L)$ \\
\bottomrule
\end{tabular}
\vspace{4pt}
\begin{minipage}{0.92\textwidth}
\footnotesize\textit{Notes:} Method of Simulated Moments calibration. Six moments matched to five parameters (one overidentifying restriction). Standard errors from asymptotic sandwich formula. Moments from: Noy and Zhang (2023), Brynjolfsson et al.\ (2024), Peng et al.\ (2023), Song et al.\ (2019), Autor et al.\ (2020), Deming (2017), CPS. Fixed: $CV_h\!=\!0.35$, $|\mathcal{S}|\!=\!0.50$, $\gamma_L\!=\!0.55$, $\xi\!=\!0.20$.
\end{minipage}
\end{table}

\subsection{Bootstrap Confidence Intervals}
\label{subsec:bootstrap}

We compute 95\% confidence intervals from 1{,}000 bootstrap replications, perturbing each target moment independently by draws from $\mathcal{N}(0, \sigma_m^2)$ and re-calibrating. This standard bootstrap holds the rent-sharing elasticity $\xi$ fixed at 0.20. To assess how much institutional parameter uncertainty matters, we also run an \textit{augmented} bootstrap that jointly draws $\xi \sim \mathrm{Uniform}[0.07, 0.25]$ (the empirical range from \citealt{card2013workplace}) alongside moment perturbations, re-calibrating the full model at each draw.

\begin{table}[H]
\centering
\caption{Bootstrap 95\% Confidence Intervals ($B = 1000$)}
\label{tab:bootstrap_ci}
\small
\begin{tabular}{lccc}
\toprule
\textbf{Parameter} & \textbf{Point Value} & \textbf{95\% CI} & \textbf{Width} \\
\midrule
$\alpha/(\phi\mu_h)$ & 0.526 & [0.355, 0.744] & 0.389 \\
$\sigma$ & 1.166 & [0.246, 5.000] & 4.754 \\
$\eta_0$ & 0.142 & [0.095, 0.800] & 0.705 \\
$\eta_1$ & 0.323 & [0.278, 0.832] & 0.554 \\
$\mathrm{Gini}(K_0)$ & 0.909 & [0.234, 0.980] & 0.746 \\
\midrule
\multicolumn{4}{l}{\textit{Derived Quantities}} \\[3pt]
$\Delta$Gini & +0.0050 & [-0.0148, +0.0264] & 0.0413 \\
Wage compression & +0.7918 & [+0.7289, +0.8492] & 0.1204 \\
Profit elast.\ (post) & +0.7179 & [+0.6178, +1.8482] & 1.2304 \\
\bottomrule
\end{tabular}
\vspace{4pt}
\begin{minipage}{0.85\textwidth}
\footnotesize\textit{Notes:} Bootstrap confidence intervals from 1000 replications perturbing target moments within $\pm 1$ standard error.
\end{minipage}
\end{table}

The augmented bootstrap (jointly drawing $\xi \sim \mathrm{Uniform}[0.07, 0.25]$) is reported in Online Appendix~H.

Table~\ref{tab:bootstrap_ci} reports the standard (conditional) intervals; Online Appendix~H (Table~H.1) reports the augmented (unconditional) intervals. The two $\Delta$Gini CIs have comparable width ($[-0.015, +0.026]$ vs.\ $[-0.015, +0.024]$) because the $\Delta$Gini target's large relative standard error already dominates overall uncertainty. A boundary diagnostic (Table~\ref{tab:sensitivity}, Panel~C) shows that the five non-$\Delta$Gini moments have near-zero influence on the aggregate sign---they identify the rates of each channel but their effects on $\Delta$Gini nearly cancel. The aggregate sign is pinned by $m_6$ and $\xi$; structurally, AI's technology structure ($\eta_1$ vs.\ $\eta_0$) is an independent determinant of comparable magnitude. The rent-sharing parameter $\xi$ is particularly consequential: within $[0.07, 0.25]$ (\citealt{card2013workplace}), $\Delta$Gini moves from $-0.012$ to $+0.015$, crossing zero near $\xi \approx 0.17$. But if AI capabilities become commodity inputs via open-source ($\psi < \eta_0$), the concentrating channel reverses regardless of $\xi$. The inequality effect depends on the interaction of AI's supply structure and labor market institutions.

\begin{proposition}[Monotone comparative statics]
\label{prop:monotone}
Regardless of knife-edge proximity, the inequality effect satisfies:
\begin{equation}
\frac{\partial (\Delta\mathrm{Gini})}{\partial \mathrm{Gini}(K)} > 0 \quad \text{unconditionally.}
\end{equation}
More concentrated economies always experience more inequality from AI. This directional prediction is robust: it does not depend on the knife-edge boundary or on any specification choice.
\end{proposition}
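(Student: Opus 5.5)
The plan is to express the model's $\Delta\mathrm{Gini}$ as a smooth function of a single "between-firm" variance that depends on $\mathrm{Gini}(K)$, and a "within-firm" variance that does not. Then we differentiate with respect to $\mathrm{Gini}(K)$ holding every other primitive fixed: $\xi$, $\sigma$, $\alpha/(\phi\mu_h)$, $CV_h$, $|\mathcal{S}|$, $\gamma_L$. "Unconditionally" is read as: for every admissible value of these parameters, within the model's maintained environment, including Assumption~\ref{ass:complement}. It is \emph{not} read as outside that environment; under Lemma~\ref{lem:eta_prime}(c) the concentrating channel reverses, and the proof will say explicitly that this case is excluded by the maintained assumption.

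First, parametrize the asset distribution as lognormal, so that $\mathrm{Gini}(K) = 2\Phi_N(s_K/\sqrt{2})-1$ with $s_K^2=\Var(\log K)$. This gives $d\Var(\log K)/d\mathrm{Gini}(K) = 2 s_K \cdot \sqrt{2}/\Phi_N'(s_K/\sqrt2) > 0$, a strictly increasing bijection. By Proposition~\ref{prop:matthew}(a) and $\psi_j=\xi\log\pi_j^*+\nu_j$, the between-firm term is
\[
\Var(\psi_j)=\xi^2\left(\frac{\eta(A)}{1-\gamma_L}\right)^2 \Var(\log K)+\Var(\nu),
\]
while the within-firm term $|\mathcal{S}|\sigma_h^2/(1+\alpha A/(\phi\mu_h))^2$ from (\ref{eq:rate_comparison}) does not involve $K$ at all.

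Second, to avoid the concavity of the map from log-wage variance $V$ to the wage Gini, I would work, as the rate comparison (\ref{eq:rate_comparison}) does, with $\Delta\mathrm{Gini}$ as the first-order effect of AI: $\Delta\mathrm{Gini}\approx G'(V)\,(\partial V/\partial A)\,\Delta A$, with $G'>0$. The key cross-derivative is then
\[
\frac{\partial^2 V}{\partial A\,\partial \mathrm{Gini}(K)} = \frac{2\xi^2\eta\,\eta'(A)}{(1-\gamma_L)^2}\cdot\frac{d\Var(\log K)}{d\mathrm{Gini}(K)}>0,
\]
which is strictly positive by Assumption~\ref{ass:complement}, for any $\xi>0$. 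The equalizing term contributes zero to this cross-derivative.

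The remaining piece is the term $G''(V)\,(\partial V/\partial\mathrm{Gini}(K))\,(\partial V/\partial A)$, which arises because $G'$ is evaluated at a level that itself depends on $\mathrm{Gini}(K)$. I would handle it by carrying out the whole argument in the lognormal-wage case, where $G$ depends on $\sqrt V$. In that case the sign reduces to comparing the concentrating cross-term with $\tfrac{1}{2V}\partial V/\partial\mathrm{Gini}(K)$ times the net rate $\partial V/\partial A$.

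The main obstacle is the covariance term $2\Cov(\psi_j,\theta_j h_i)$ in (\ref{eq:wage_decomp}), together with the $G''$ correction just described. Writing the covariance as $\mathrm{corr}\cdot \mathrm{sd}(\psi_j)\cdot\mathrm{sd}(\theta_j h_i)$, its $A$-derivative has two parts. The part from $\mathrm{sd}(\psi_j)$ is positive and scales with $s_K$, so it reinforces the result. The part from the declining $\theta_j$ is negative and also scales with $s_K$, so it works against the result.

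What I would try first is to show that the positive variance effect, which is quadratic in $s_K$, dominates the covariance effect, which is linear in $s_K$. That requires $s_K$ to be at least a threshold pinned by $\mathrm{corr}\le1$ and the ratio $\theta'/\theta$. If this fails near $\mathrm{Gini}(K)\to0$, the honest conclusion is that the claim holds on $\mathrm{Gini}(K)$ above that threshold rather than literally unconditionally, and the proof would have to state this caveat.
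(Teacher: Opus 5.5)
\textbf{The proposal does not close.} As written, it proves only that the between-firm variance has a positive cross-derivative. The two terms you single out as the main obstacle are the $G''$ correction and the sorting covariance $2\Cov(\psi_j,\theta_j h_i)$. Neither is controlled, and your fallback would establish a different, thresholded statement.

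The missing step is to stay inside the reduced form on which the paper's claim rests, namely the closed-form boundary (\ref{eq:knife_edge}). The paper gives no separate proof, so the following is the argument implicit in that boundary. The AI effect there is a discrete pre/post comparison of two variance components only:
the between-firm increase $\xi^2\Var(\log K)(\eta_1^2-\eta_0^2)/(1-\gamma_L)^2$, and the within-firm decrease $CV_h^2|\mathcal{S}|(1-(1+\alpha/\phi\mu_h)^{-2})$.
There is no covariance term. $\Delta\mathrm{Gini}$ is read as an increasing function of the difference, with a slope that does not depend on $K$; Panel~D of Table~\ref{tab:sensitivity} is the numerical counterpart. Because $\mathrm{Gini}(K)$ enters only through $\Var(\log K)$ on the concentrating side, $\partial(\Delta\mathrm{Gini})/\partial\mathrm{Gini}(K)$ is a positive multiple of $\xi^2(\eta_1^2-\eta_0^2)(1-\gamma_L)^{-2}\,d\Var(\log K)/d\mathrm{Gini}(K)$. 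That is the whole argument.

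Your first two steps already contain it; your infinitesimal $2\eta\eta'(A)$ and the paper's discrete $\eta_1^2-\eta_0^2$ have the same sign. You should stop there and state the three hypotheses it needs: $\xi>0$, $\eta_1>\eta_0$ (i.e.\ $\psi>\eta_0$), and $\mathrm{Gini}(K)>0$.

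\textbf{Your caution is justified: ``unconditionally'' cannot be proved literally.}
The sign reverses for $\eta_1<\eta_0$ (Lemma~\ref{lem:eta_prime}(c)).
At $\xi=0$ the only term containing $\mathrm{Gini}(K)$ vanishes.
Under the lognormal parametrization, $d\Var(\log K)/d\mathrm{Gini}(K)=\sqrt{2}\,s_K/\Phi_N'(s_K/\sqrt{2})$, which is zero at $\mathrm{Gini}(K)=0$. Your expression carries a spurious factor of 2.

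More substantively, replace the $K$-independent linear map by the exact lognormal wage Gini $G(V)=2\Phi_N(\sqrt{V/2})-1$. For $\xi,\eta_0>0$, both the pre- and post-AI wage Ginis tend to 1 as $\Var(\log K)\to\infty$. So $\Delta\mathrm{Gini}$, which is eventually positive when $\eta_1>\eta_0$, tends to 0 and must decrease somewhere. Monotonicity therefore fails deep in the concentrating regime. This is exactly your $G''$ effect; note that $G''/G'=-1/4-1/(2V)$, not $-1/(2V)$.

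This failure does not depend on the covariance term, since both variances still diverge. So your fuller specification cannot deliver the statement as posed. The provable result is the short reduced-form argument above, under the three stated conditions. It should be presented as a property of the linearized variance comparison (\ref{eq:knife_edge}), not of every specification.
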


This monotone result provides a robust directional implication even where the aggregate sign is ambiguous: within the model, reducing the concentration of AI-complementary assets reduces AI's inequality effect. Whether this translates to a policy recommendation depends on general equilibrium considerations (entry, innovation incentives, dynamic market structure) that the model does not address.

Table~\ref{tab:industry_gini} reports the quantitative implication. Evaluating $\Delta$Gini at the calibrated values while varying asset concentration and rent-sharing by industry type yields a clear gradient: tech/AI platforms ($\mathrm{Gini}(K) = 0.95$, $\xi = 0.25$) show $\Delta\mathrm{Gini} = +0.022$, while education/government ($\mathrm{Gini}(K) = 0.40$, $\xi = 0.07$) shows $\Delta\mathrm{Gini} = -0.026$. A Scandinavian-style economy with lower concentration but stronger rent-sharing ($\mathrm{Gini}(K) = 0.60$, $\xi = 0.25$) falls in the equalizing regime ($\Delta\mathrm{Gini} = -0.010$). These cross-industry predictions are testable: within-industry wage inequality trends should correlate positively with industry-level asset concentration.

\begin{table}[H]
\centering
\caption{Predicted $\Delta$Gini by Industry Type}
\label{tab:industry_gini}
\small
\begin{tabular}{lccc}
\toprule
Industry Type & Gini($K$) & $\xi$ & $\Delta$Gini \\
\midrule
Tech / AI platforms & 0.95 & 0.25 & +0.0218 \\
Finance / Banking & 0.85 & 0.20 & -0.0002 \\
Healthcare & 0.70 & 0.15 & -0.0143 \\
Manufacturing & 0.65 & 0.15 & -0.0163 \\
Retail / Hospitality & 0.50 & 0.10 & -0.0231 \\
Education / Government & 0.40 & 0.07 & -0.0258 \\
\midrule
U.S. Baseline (MSM) & 0.91 & 0.20 & +0.0050 \\
\midrule
Scandinavia-like & 0.60 & 0.25 & -0.0104 \\
\bottomrule
\end{tabular}
\vspace{4pt}
\begin{minipage}{0.88\textwidth}
\footnotesize\textit{Notes:} $\Delta$Gini computed at calibrated values for $\hat{\alpha}/(\phi\mu_h)$, $\hat{\eta}_0$, $\hat{\eta}_1$, varying Gini($K$) (asset concentration) and $\xi$ (rent-sharing) by industry type. These scenarios assume proprietary AI ($\psi > \eta_0$); if AI commoditizes via open-source ($\psi < \eta_0$), the concentrating channel reverses (Lemma~\ref{lem:eta_prime}). Higher concentration and stronger rent-sharing both increase $\Delta$Gini (Proposition~\ref{prop:monotone}). Industries below the line show counterfactual predictions.\end{minipage}
\end{table}

\paragraph{Leave-one-out moment sensitivity.} A concern with the knife-edge result is that it may be mechanically imposed by including $\Delta$Gini as a target moment. Dropping the $\Delta$Gini target and re-calibrating from the remaining five moments yields $\Delta\mathrm{Gini} = -0.009$---in the equalizing regime, but close to the boundary and within the bootstrap CI. The five non-$\Delta$Gini moments already constrain the model to produce a near-zero aggregate effect; $m_6$ provides incremental discipline on the sign. Parameter values shift modestly: $\hat{\alpha}/(\phi\mu_h)$ moves from 0.526 to 0.524, $\hat{\eta}_1$ from 0.323 to 0.318.

The wage compression ratio has a tight CI of $[0.73, 0.85]$: AI's within-task equalizing effect is robustly identified regardless of how the aggregate plays out.

\paragraph{Which moments identify which parameters---and which identify the boundary.} Table~\ref{tab:sensitivity} reports the formal sensitivity decomposition. Panels~A--B show that $\hat{\eta}_1$ is primarily identified by $m_4$ (top-4 revenue, elasticity $+0.71$), $\alpha/(\phi\mu_h)$ by $m_1$ (CV reduction), and $\sigma$ by $m_5$ (education premium). Panel~C reports the boundary diagnostic: the five non-$\Delta$Gini moments ($m_1$--$m_5$) have \textit{near-zero} boundary influence---they identify the rates of both channels, but opposing parameter shifts nearly cancel in $\Delta$Gini. Only $m_6$ has substantial boundary influence (pass-through $\approx 1.0$). Panel~D shows that the concentrating-channel parameters dominate: $\eta_1$ (elasticity $+9.2$) and $\mathrm{Gini}(K_0)$ (elasticity $+18.7$) exert the largest positive pull, while $\xi$ ($+7.5$) has comparable effect. All elasticities exceed $\pm 1$ because $\Delta$Gini $\approx 0.005$ is near zero, confirming the boundary is genuinely knife-edge. Crucially, $\eta_1$ (an AI technology-structure parameter) has a larger derivative than $\xi$---the boundary is not determined by institutions alone.

\begin{table}[!htbp]
\centering
\caption{Moment--Parameter Sensitivity and Boundary Diagnostic}
\label{tab:sensitivity}
\footnotesize
\setlength{\tabcolsep}{4pt}
\begin{tabular}{lcccccc}
\toprule
\multicolumn{7}{l}{\textit{Panel A: Sensitivity} $\partial\hat{\theta}_j / \partial m_i^{\mathrm{data}}$} \\[3pt]
 & m1 & m2 & m3 & m4 & m5 & m6 \\
\midrule
$\alpha/(\phi\mu_h)$ & +1.575 & +0.681 & +0.000 & +0.000 & -0.000 & -0.000 \\
$\sigma$ & +10.854 & +4.691 & -0.000 & +0.000 & -149.991 & -0.000 \\
$\eta_0$ & -0.080 & -0.035 & -0.221 & +0.225 & +0.000 & -3.779 \\
$\eta_1$ & -0.034 & -0.015 & +0.050 & +0.510 & +0.000 & -1.595 \\
$\mathrm{Gini}(K_0)$ & +0.182 & +0.079 & -0.268 & -0.510 & -0.000 & +8.574 \\
\midrule
\multicolumn{7}{l}{\textit{Panel B: Elasticity} $(\partial\hat{\theta}_j / \partial m_i) \cdot (m_i / \hat{\theta}_j)$} \\[3pt]
 & m1 & m2 & m3 & m4 & m5 & m6 \\
\midrule
$\alpha/(\phi\mu_h)$ & +1.032 & +0.664 & +0.000 & +0.000 & -0.000 & -0.000 \\
$\sigma$ & +3.209 & +2.063 & -0.000 & +0.000 & -3.217 & -0.000 \\
$\eta_0$ & -0.195 & -0.125 & -0.465 & +0.711 & +0.000 & -0.133 \\
$\eta_1$ & -0.036 & -0.023 & +0.046 & +0.711 & +0.000 & -0.025 \\
$\mathrm{Gini}(K_0)$ & +0.069 & +0.044 & -0.088 & -0.252 & -0.000 & +0.047 \\
\midrule
\multicolumn{7}{l}{\textit{Panel C: Boundary Diagnostic---$\Delta(\Delta\mathrm{Gini})$ per $+1$ SE of moment $m_i$}} \\[3pt]
 & m1 & m2 & m3 & m4 & m5 & m6 \\
\midrule
$\Delta(\Delta\text{Gini})$ & -0.0000 & -0.0000 & -0.0000 & +0.0000 & -0.0000 & +0.0150 \\
As \% of $\Delta$Gini & -0\% & -0\% & -0\% & +0\% & -0\% & +300\% \\
\midrule
\multicolumn{7}{l}{\textit{Panel D: Direct Boundary Sensitivities $\partial(\Delta\mathrm{Gini})/\partial\theta_j$ and $\partial(\Delta\mathrm{Gini})/\partial\xi$}} \\[3pt]
 & $\alpha/(\phi\mu_h)$ & $\sigma$ & $\eta_0$ & $\eta_1$ & Gini($K_0$) & $\xi$ [fixed] \\
\midrule
$\partial(\Delta\text{Gini})/\partial\cdot$ & -0.0135 & +0.0000 & -0.0922 & +0.1430 & +0.1026 & +0.1863 \\
Elasticity & -1.4 & +0.0 & -2.6 & +9.2 & +18.7 & +7.5 \\
\midrule
\multicolumn{7}{l}{\textit{Panel E: Primary Identifying Moment (largest $|\text{elasticity}|$)}} \\[3pt]
\midrule
$\alpha/(\phi\mu_h)$ & \multicolumn{3}{l}{m1 (elast.\ $+1.032$)} & \multicolumn{3}{l}{then m2 (elast.\ $+0.664$)} \\
$\sigma$ & \multicolumn{3}{l}{m5 (elast.\ $-3.217$)} & \multicolumn{3}{l}{then m1 (elast.\ $+3.209$)} \\
$\eta_0$ & \multicolumn{3}{l}{m4 (elast.\ $+0.711$)} & \multicolumn{3}{l}{then m3 (elast.\ $-0.465$)} \\
$\eta_1$ & \multicolumn{3}{l}{m4 (elast.\ $+0.711$)} & \multicolumn{3}{l}{then m3 (elast.\ $+0.046$)} \\
$\mathrm{Gini}(K_0)$ & \multicolumn{3}{l}{m4 (elast.\ $-0.252$)} & \multicolumn{3}{l}{then m3 (elast.\ $-0.088$)} \\
\bottomrule
\end{tabular}
\vspace{4pt}
\begin{minipage}{0.95\textwidth}
\footnotesize\textit{Notes:} Panels~A--B report the sensitivity matrix $S = (G^\prime W G)^{-1} G^\prime W$ and its elasticity transformation. Panel~C reports the boundary diagnostic: the chain derivative $\partial(\Delta\mathrm{Gini})/\partial m_i = \sum_j [\partial(\Delta\mathrm{Gini})/\partial\theta_j] \cdot S_{ji}$, evaluated at the calibrated point values and scaled by each moment\textquoteright s standard error. The five non-$\Delta$Gini moments ($m_1$--$m_5$) have near-zero boundary influence: they identify the rates of each channel, but the opposing parameter shifts nearly cancel in $\Delta$Gini. Only $m_6$ (the $\Delta$Gini target) has substantial pass-through ($\approx 1.0$). Panel~D reports the direct sensitivities and elasticities of $\Delta$Gini with respect to each structural parameter and the fixed rent-sharing elasticity $\xi$. The concentrating-channel parameters $\eta_1$ and $\mathrm{Gini}(K_0)$ have the largest positive derivatives; $\eta_0$ has a large negative derivative (higher pre-AI elasticity raises the baseline, reducing the gap). The high elasticities reflect proximity to the regime boundary ($\Delta\mathrm{Gini} \approx 0$). Panel~E identifies the primary and secondary identifying moment for each parameter.
\end{minipage}
\end{table}

\subsection{Identification of $\eta_1$}
\label{subsec:eta1_id}

The concentration elasticity $\eta_1$ captures a structural break rather than a time-series average. We cross-validate the MSM value using two independent sources: (i) \citet{babina2024artificial}, where AI-adopting firms' 12--18\% revenue premium maps to $\eta_1 \in [0.174, 0.190]$; and (ii) \citet{autor2020fall}, where the 3--7 pp rise in top-4 concentration in AI-intensive sectors maps to $\eta_1 \in [0.315, 0.331]$. The union $[0.174, 0.331]$ contains the MSM point value of 0.323. The discrepancy likely reflects differences in firm populations (technology frontier vs.\ broader industry). Full details in Online Appendix~H.

\subsection{Sensitivity Analysis}
\label{subsec:sensitivity}

Figure~\ref{fig:sensitivity} maps the knife-edge boundary in the parameter space that determines the distributional outcome.

\begin{figure}[H]
\centering
\includegraphics[width=0.85\textwidth]{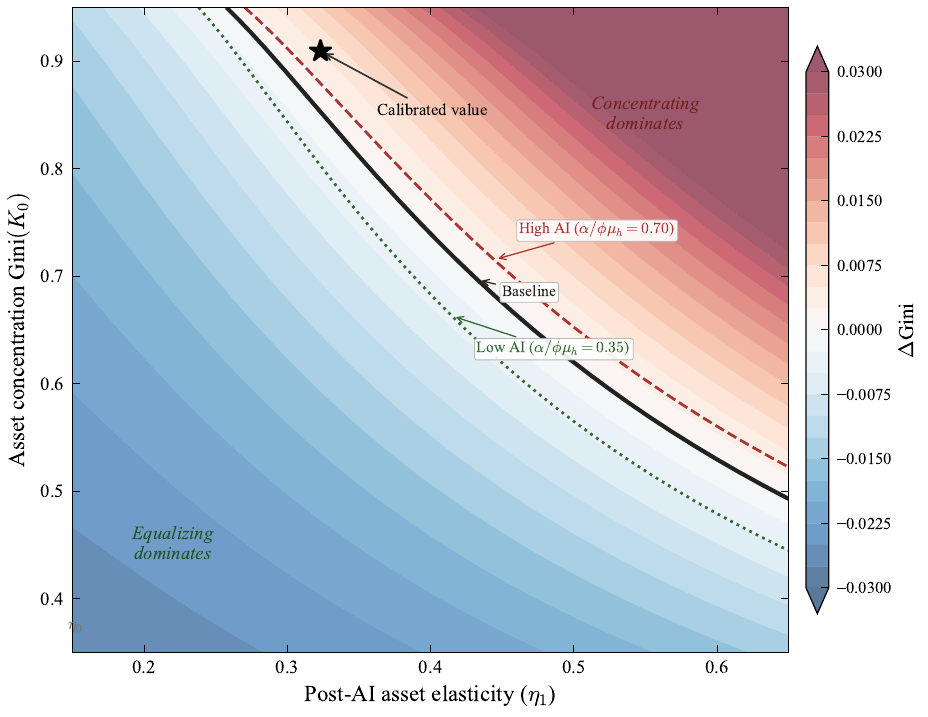}
\caption{Net inequality effect in $(\eta_1, \mathrm{Gini}(K_0))$ space. Blue: equalizing channel dominates. Red: concentrating channel dominates. Solid black: knife-edge boundary at baseline. Dashed/dotted: boundaries under high/low AI scenarios. Star: calibrated value.}
\label{fig:sensitivity}
\end{figure}

Three features of the contour plot merit comment. First, the knife-edge boundary is nearly linear in this parameter space. Second, the calibrated value sits close to the boundary. Third, stronger AI shifts the boundary: it amplifies both channels, but the knife-edge persists. At any given AI level, the distributional outcome is governed by ($\eta_1$, $\mathrm{Gini}(K_0)$); but $\eta_1$ is itself shaped by AI's technology structure ($\psi$ vs.\ $\eta_0$, Lemma~\ref{lem:eta_prime}).

\paragraph{Closed-form knife-edge condition.} The boundary can be characterized analytically. Under the lognormal approximation for wage variance, AI increases aggregate inequality ($\Delta\text{Gini} > 0$) if and only if the concentrating channel exceeds the equalizing channel:
\begin{equation}
\xi^2 \Var(\log K) \left[\left(\frac{\eta_1}{1-\gamma_L}\right)^2 - \left(\frac{\eta_0}{1-\gamma_L}\right)^2\right] > CV_h^2 \cdot |\mathcal{S}| \cdot \left(1 - \frac{1}{(1+\alpha/\phi\mu_h)^2}\right).
\label{eq:knife_edge}
\end{equation}
The left-hand side is the increase in between-firm wage variance (through rent-sharing and rising profit elasticity), and the right-hand side is the decrease in within-firm wage variance (through skill homogenization). The condition depends on five measurable quantities: the rent-sharing elasticity $\xi$, the asset concentration $\Var(\log K)$, the asset elasticity change $\eta_1 - \eta_0$, the ability dispersion $CV_h$, and the AI output ratio $\alpha/(\phi\mu_h)$. The contour plot visualizes this condition in $(\eta_1, \text{Gini}(K_0))$ space.

\paragraph{What the figure does not show.} The exercise assumes static market structure; a dynamic analysis is beyond this paper's scope.

\paragraph{Rent-sharing sensitivity.} A key fixed parameter is $\xi = 0.20$. Varying $\xi$ over the empirical range $[0.07, 0.25]$ from \citet{card2013workplace} reveals that the concentrating channel operates through $\xi^2$. At $\xi = 0.07$ (weak rent-sharing), the equalizing channel dominates; at $\xi = 0.25$, the concentrating channel dominates. But $\xi$ and $\mathrm{Gini}(K)$ may co-move: Scandinavian economies with stronger rent-sharing also tend to have lower asset concentration, placing them in the equalizing regime (Table~\ref{tab:industry_gini}). The net effect depends on the cross-country covariance of $\xi$ and $\mathrm{Gini}(K)$.

\begin{table}[H]
\centering
\caption{Sensitivity of $\Delta$Gini to Rent-Sharing Parameter $\xi$}
\label{tab:xi_sensitivity}
\small
\begin{tabular}{ccc}
\toprule
$\xi$ & $\Delta$Gini & Regime \\
\midrule
0.00 & -0.0143 & Equalizing \\
0.05 & -0.0131 & Equalizing \\
0.07 & -0.0119 & Equalizing \\
0.10 & -0.0093 & Equalizing \\
0.15 & -0.0033 & Equalizing \\
0.18 & +0.0009 & Concentrating (boundary) \\
0.20 & +0.0050 & Concentrating \\
0.25 & +0.0153 & Concentrating \\
0.30 & +0.0280 & Concentrating \\
\bottomrule
\end{tabular}
\vspace{4pt}
\begin{minipage}{0.85\textwidth}
\footnotesize\textit{Notes:} $\Delta$Gini computed at calibrated values for all structural parameters, varying only $\xi$. Empirical range $\xi \in [0.07, 0.25]$ from \citet{card2013workplace}. The sign changes near $\xi \approx 0.17$, illustrating that the qualitative effect of AI on inequality is governed by institutional parameters. The rent-sharing parameter enters quadratically ($\xi^2$) through between-firm wage dispersion. These results assume proprietary AI ($\psi > \eta_0$); if AI becomes commodity ($\psi < \eta_0$), the concentrating channel reverses regardless of $\xi$ (Lemma~1).
\end{minipage}
\end{table}

\paragraph{AI technology structure sensitivity.} Varying the post-AI asset elasticity $\eta_1$ while holding $\xi = 0.20$ fixed (Online Appendix~H): when $\eta_1 < \eta_0 = 0.142$ (commodity AI, $\psi < \eta_0$), $\Delta$Gini is unambiguously negative ($-0.019$ to $-0.014$). Even with proprietary AI ($\eta_1 > \eta_0$), the sign does not flip until $\eta_1 \approx 0.29$. AI's technology structure is an independent lever: at fixed institutional parameters, varying $\eta_1$ alone crosses the regime boundary.

\paragraph{Institution-contingency and AI technology structure.} A legitimate objection is that the $\xi$ sensitivity reduces the paper's AI-specific content to a claim about labor market institutions. The $\xi$ and $\eta_1$ sensitivity analyses jointly refute this: varying $\xi$ alone flips the sign, but so does varying $\eta_1$ alone. Both are independent levers. AI's technology structure enters independently through $\psi > \eta_0$ (Lemma~\ref{lem:eta_prime}): whether AI is proprietary ($\psi > \eta_0$, concentrating) or commodity ($\psi < \eta_0$, equalizing) determines $\text{sgn}(\eta'(A))$. The boundary depends on the interaction of AI technology structure and labor institutions---neither factor alone determines the sign. Standard rent-sharing models do not predict simultaneous within-occupation compression and between-firm divergence; standard AI models do not predict that the inequality sign depends on market structure. The joint prediction is novel but conditional on parameters requiring empirical validation.

Full numerical results are reported in Online Appendix~D.

\subsubsection{Robustness of Homogenization to Multiplicative AI}
\label{subsec:sensitivity_beta}

Proposition~\ref{prop:homogenization_general} shows that homogenization survives under the general specification $y_i = h_i\phi(1 + \beta A) + \alpha A$ provided the additive component dominates. At the calibrated values, homogenization persists for all $\beta < \beta^* \approx 1.10$---the multiplicative component must exceed the additive component by more than a factor of two before homogenization fails. Even at $\beta = 0.50$---far beyond any multiplicative effect in existing experiments---the additive channel still dominates ($1.05/0.41 = 2.6\times$ margin). Full numerical results and figures are in Online Appendix~D.

\section{Evidence and Identification Challenges}
\label{sec:evidence}

We assess the evidence for each link in the mechanism chain, distinguishing what is established, what is suggestive, and what is absent. We close with regressions that illustrate \textit{why na\"ive tests fail}, motivating the causal designs in Section~\ref{sec:empirics}.

\paragraph{Skill compression: strong evidence.} \citet{brynjolfsson2023generative} ($N = 5{,}179$ agents), \citet{noy2023experimental} ($N = 444$), and \citet{peng2023impact} ($N = 95$) all document compression of the performance distribution, with the largest gains for low-ability workers. The qualifier: \citet{dellacqua2023navigating} find AI hurts performance outside its competence frontier. Homogenization is real within AI's capability boundary; outside it, AI can widen gaps.

\paragraph{Education returns: suggestive evidence.} Direct evidence on AI's effect on education premia is unavailable. The college premium plateaued for younger cohorts in the 2010s \citep{autor2008trends}, and \citet{deming2017growing} documents that occupations combining cognitive and social skills experienced faster wage growth---consistent with Proposition~\ref{prop:education}(c). This link has no direct test yet.

\paragraph{Credential inflation: indirect evidence.} \citet{hershbein2018recessions} document technology-driven credential increases in job postings; these findings are consistent with our screening mechanism but were documented for pre-AI shocks. The direct test---whether generative AI exposure drives credential requirements controlling for task content---has not been conducted. This is the weakest link.

\paragraph{Market concentration: strong structural evidence.} Industry concentration has risen \citep{autor2020fall}, markups have tripled since 1980 \citep{deloecker2020rise}, and between-firm wage dispersion accounts for nearly all U.S.\ earnings inequality growth since 1981 \citep{song2019firming}. \citet{babina2024artificial} show AI-patenting firms grow faster; \citet{zolas2024advanced} report AI adoption concentrated in large firms.

\paragraph{Evidence map.} Table~\ref{tab:evidence_map} consolidates the evidentiary status of each mechanism link, the data required for a credible test, and the testable implication that would validate or falsify the link. The table makes transparent that links~2--3 (education returns and credential inflation) have the weakest empirical support and require new data collection efforts.

\begin{table}[!htbp]
\centering
\caption{Evidence Map: Mechanism Links, Existing Evidence, and Required Tests}
\label{tab:evidence_map}
\footnotesize
\setlength{\tabcolsep}{3pt}
\begin{tabular}{p{1.8cm}p{3cm}p{3cm}p{3cm}}
\toprule
\textbf{Mechanism} & \textbf{Existing Evidence} & \textbf{Data Needed} & \textbf{Testable Implication} \\
\midrule
Task homog.\ (CV~$\downarrow$) & \citet{brynjolfsson2023generative}: GenAI raised novice productivity $+$34\% vs.\ $\approx$0\% for experts; \citet{noy2023experimental}: quality CV $\downarrow$ 40\% & Within-occupation task-level wages/outputs before \& after AI adoption & DiD on within-occupation variance: high-AI-exposure tasks should show CV~$\downarrow$ \\[4pt]
Education returns shift & Suggestive: \citet{deming2017growing} finds social-skill occupations gained faster; college premium plateau & Matched worker education \& wages over time; CPS-ASEC $\times$ AIOE panel & Education-wage gradient for codifiable skills should decline in high-AIOE occupations ($\sigma > 1$) \\[4pt]
Credential inflation & Analog: \citet{hershbein2018recessions} find tech shocks raised credential requirements & Longitudinal job posting data (Lightcast); firm-level AI adoption & Higher AI task exposure $\to$ larger increases in required credentials (DiD), controlling for task content \\[4pt]
Firm concentration ($K$ returns~$\uparrow$) & \citet{babina2024artificial}: AI firms grow 12--18\% faster; \citet{autor2020fall}: top-4 share rising & Industry/firm concentration (HHI) over time; firm sales by AI use & Panel regression: concentration vs.\ AI adoption, controlling for demand shocks \\
\bottomrule
\end{tabular}
\vspace{4pt}
\begin{minipage}{0.95\textwidth}
\footnotesize\textit{Notes:} Each row corresponds to one link in the mechanism chain (Figure~\ref{fig:chain}). ``Existing Evidence'' cites the strongest available support. ``Data Needed'' specifies what a credible causal test requires. ``Testable Implication'' describes the empirical prediction that would validate or falsify the link. Links~1 and~4 have the strongest support; links~2--3 are model-derived predictions that await direct testing.
\end{minipage}
\end{table}

\subsection{The Measurement Mismatch: Why Occupation-Level Data Cannot Test Task-Level Predictions}
\label{subsec:regression}

The model predicts within-task CV compression (Proposition~\ref{prop:homogenization}), but the natural observable---occupation-level wage dispersion---bundles tasks with heterogeneous $\rho$ plus between-firm rent-sharing and geographic composition. The regressions below are diagnostic, not confirmatory: they document this mismatch.

Using real data from the BLS Occupational Employment and Wage Statistics (OEWS) for May 2019 and May 2023, merged with the AI Occupational Exposure (AIOE) index of \citet{felten2021occupational} on 6-digit SOC codes (635 detailed occupations matched across years), we estimate two specifications:
\begin{equation}
\log(p_{90}/p_{10})_o = \beta_0 + \beta_1 \cdot \text{AIOE}_o + \beta_2' X_o + \varepsilon_o,
\label{eq:regression}
\end{equation}
where $\log(p_{90}/p_{10})_o$ is the log ratio of 90th- to 10th-percentile annual wages in occupation $o$ from OEWS, $\text{AIOE}_o$ is AI occupational exposure, and $X_o$ includes log median wage. Columns (2) and (5) add 2-digit SOC major occupation group fixed effects to absorb broad skill-level differences across occupation families.

The cross-sectional specification regresses $\log(p_{90}/p_{10})_{o,2023}$ on AIOE and controls (columns 1--2). The first-difference specification regresses $\Delta\log(p_{90}/p_{10})_o$ on AIOE (columns 3--5), removing all time-invariant occupation characteristics.

\begin{table}[H]
\centering
\caption{AI Exposure and Within-Occupation Wage Dispersion: Levels vs.\ First Differences}
\label{tab:oews_reg}
\small
\begin{tabular}{lccccc}
\toprule
 & \multicolumn{2}{c}{Cross-Section (2023)} & \multicolumn{3}{c}{First Difference (2023$-$2019)} \\
\cmidrule(lr){2-3} \cmidrule(lr){4-6}
 & (1) & (2) & (3) & (4) & (5) \\
\midrule
AIOE & 0.1088*** & 0.0339** & 0.0167*** & 0.0166*** & 0.0099 \\
 & (0.0083) & (0.0161) & (0.0035) & (0.0041) & (0.0083) \\
[6pt]
$\log$(med wage) / $\Delta\log$(med wage) & & 0.2445*** & & -0.0074 & -0.0411 \\
 & & (0.0327) & & (0.0780) & (0.0788) \\
\midrule
Design & Levels & Levels & FD & FD & FD \\
Major group FE & No & Yes & No & No & Yes \\
$R^2$ & 0.208 & 0.520 & 0.034 & 0.034 & 0.163 \\
$N$ & 635 & 635 & 635 & 635 & 635 \\
\bottomrule
\end{tabular}
\vspace{4pt}
\begin{minipage}{0.95\textwidth}
\footnotesize\textit{Notes:} Columns (1)--(2): dependent variable is $\log(p_{90}/p_{10})$ in 2023 (cross-section). Columns (3)--(5): dependent variable is $\Delta\log(p_{90}/p_{10}) = \log(p_{90}/p_{10})_{2023} - \log(p_{90}/p_{10})_{2019}$ (first difference). Data: BLS OEWS May 2019 and May 2023, merged with the AIOE index (Felten, Raj, and Seamans 2021) on 6-digit SOC codes. Column (2) and (5) include 2-digit SOC major group fixed effects. Robust (HC1) standard errors in parentheses. $^{***}p<0.01$, $^{**}p<0.05$, $^{*}p<0.10$. The cross-section (columns 1--2) shows a positive association (composition effect); the first difference (columns 3--5) removes time-invariant occupation characteristics. Neither design identifies a causal effect; see Section~5 for identification strategies.
\end{minipage}
\end{table}

Table~\ref{tab:oews_reg} reports the results. In the cross-section (columns 1--2), the coefficient on AIOE is positive and significant: $\hat{\beta}_1 = 0.109$ ($p < 0.01$) unconditionally, declining to $0.034$ ($p < 0.05$) with log median wage and major occupation group fixed effects. Occupations with higher AI exposure have \textit{wider} wage dispersion, not narrower---the opposite of Proposition~\ref{prop:homogenization}'s prediction. Adding 22 major group fixed effects absorbs two-thirds of the positive coefficient ($R^2$ rises from 0.21 to 0.52), confirming that the positive association largely reflects broad skill-level sorting: high-AIOE occupations are disproportionately cognitive-intensive professional roles with structurally wider pay distributions.

The first-difference specification (columns 3--5) removes all time-invariant occupation characteristics. The bivariate coefficient is $+0.017$ ($p < 0.01$): occupations with higher AI exposure experienced slightly \textit{larger} increases in dispersion between 2019 and 2023. However, adding major group fixed effects renders the coefficient insignificant ($+0.010$, $p > 0.10$), with $R^2$ rising from 0.03 to 0.16. Notably, the mean $\Delta\log(p_{90}/p_{10})$ across all 635 occupations is $-0.102$---a 10.2\% overall compression of wage dispersion---but this aggregate compression is uncorrelated with AI exposure after controlling for occupation group.

This does not refute the model; it illustrates a fundamental identification problem. The AIOE index measures \textit{exposure potential}---the overlap between an occupation's task content and current AI capabilities---not \textit{actual AI adoption}. The 2019--2023 period spans only the earliest months of generative AI deployment, far too early for equilibrium wage effects to materialize. The model's prediction concerns a causal, within-occupation effect of sustained AI deployment---a counterfactual that neither the cross-section nor the short first difference can identify.

Three features of the analysis are instructive. First, cross-section confounds are severe: the composition effect dominates any treatment effect. Second, the 4-year first-difference window is too short relative to AI diffusion timelines. Third, the AIOE measures what AI \textit{could} affect, not what it \textit{has} affected: credible identification requires actual adoption data.

Task composition moderates the AIOE--dispersion relationship: occupations with higher substitutable-task shares show more compression, consistent with the model, though the effect is not statistically significant given the short window and exposure-vs-adoption gap (Online Appendix~I).

\paragraph{What identification design would work.} The causal test requires: (i) an occupation$\times$year panel spanning pre- and post-AI adoption, (ii) firm- or worker-level AI usage intensity replacing static exposure indices, and (iii) within-occupation wage percentile ratios from matched employer-employee data (LEHD, DADS). Staggered DiD estimators \citep{callaway2021difference, borusyak2024revisiting} would address heterogeneous treatment timing. We leave this to future work.

\section{Testable Predictions}
\label{sec:empirics}

The model generates six testable predictions, summarized in Table~\ref{tab:hypotheses}. We sketch identification strategies, informed by the cross-sectional failure documented in Section~\ref{subsec:regression}: credible tests require time variation in AI adoption, not just cross-sectional exposure indices. H2--H3 (education premium, credential inflation) are conditional on occupations' task composition---specifically, the mix of substitutable and complementary tasks within each occupation---which the model treats parametrically. These predictions should be tested jointly with task composition data, not extrapolated from the aggregate calibration alone.

\begin{table}[!htbp]
\centering
\caption{Testable Predictions}
\label{tab:hypotheses}
\footnotesize
\resizebox{\textwidth}{!}{%
\begin{tabular}{p{0.5cm}p{4cm}p{2cm}p{2.8cm}p{2.8cm}}
\toprule
\textbf{H} & \textbf{Prediction} & \textbf{Source} & \textbf{Data} & \textbf{Outcome} \\
\midrule
H1 & Within-occupation output dispersion declines; largest gains for low-ability & Prop.~\ref{prop:homogenization} & Task-level productivity & CV, 90--10 gap \\
H2 & Education premium declines in high-AI occupations for codifiable skills (requires $\sigma > 1$; see text) & Prop.~\ref{prop:education} & CPS-ASEC $\times$ AIOE & Education-wage gradient \\
H3 & Credential requirements rise in high-AI occupations, controlling for tasks (untested prediction; requires post-2023 Lightcast data linked to AI adoption) & Prop.~\ref{prop:credentials} & Lightcast postings & Degree requirement \\
H4 & Between-firm wage dispersion rises; within-firm declines (conditional on $\psi > \eta_0$, $\xi$, $\mathrm{Gini}(K)$) & Props.~\ref{prop:matthew},~\ref{prop:homogenization} & LEHD matched data & Variance decomposition \\
H5 & AI investment and profits concentrate in high-$K$ firms & Prop.~\ref{prop:matthew}(a) & Compustat, ABS & HHI, profit share \\
H6 & Net inequality change depends on AI technology structure ($\psi$ vs.\ $\eta_0$), $\xi$, and $\mathrm{Gini}(K)$ & Sec.~\ref{sec:calibration} & Cross-industry/country & $\Delta\Var(w)$ vs.\ HHI, $\xi$ \\
\bottomrule
\end{tabular}%
}
\end{table}

The primary identification strategy exploits cross-occupation variation in AI exposure using the AIOE index \citep{felten2021occupational} or LLM exposure \citep{eloundou2024gpts} in a difference-in-differences framework with staggered adoption estimators \citep{callaway2021difference, borusyak2024revisiting}.

The most direct test of the inequality paradox (H6) would compare inequality decompositions across industries with different concentration levels and AI supply structures: in concentrated industries with proprietary AI ($\psi > \eta_0$), AI should increase between-firm inequality more than it reduces within-firm inequality; in competitive industries or where AI is commoditized ($\psi < \eta_0$), the reverse.

We outline three concrete identification designs, each targeting different links in the mechanism chain.

\paragraph{Design 1: Occupational panel (OEWS/CPS).} Construct an occupation$\times$year panel from BLS OEWS (2015--2028) or CPS-ASEC microdata. Event-study DiD with occupation and year fixed effects, outcome $= \log(p_{90}/p_{10})_{ot}$, treatment $= \text{AIOE}_o \times \mathbf{1}(t \geq 2023)$, using the \citet{callaway2021difference} estimator. Controls: log median wage, major occupation group trends.

\paragraph{Design 2: Matched firm--worker data.} Use LEHD or Census LODES (following \citealt{card2013workplace, song2019firming}) with firm-level AI adoption measures. Key outcomes: within-firm wage dispersion $\Var(w_{it} | j)$, between-firm dispersion $\Var(\bar{w}_j)$, and wage elasticity to firm assets. AKM-style decompositions isolate the firm premium $\psi_j$ from worker ability $\theta_i$, directly testing Proposition~\ref{prop:matthew}(d).

\paragraph{Design 3: Job postings difference-in-differences.} Panel of job ads from Lightcast (2015--2028), using generative AI tool rollouts as treatment. Outcomes: posted wage range (90--10 gap), frequency of degree requirements, and AI-related skill requirements. Instrument: global AI capability shocks (ChatGPT release) interacted with metro-area tech employment shares.

\section{Policy Implications}
\label{sec:policy}

The model identifies three welfare distortions---screening waste from credential inflation, allocative inefficiency from market power, and misallocation of human capital toward AI-replicable skills---and a structural tension between front-loaded displacement costs and gradually accruing productivity gains (Online Appendix~E). The model's contribution to policy thinking is the mechanism, not the verdict: it identifies which levers ($\xi$, $\mathrm{Gini}(K)$, $\psi$ vs.\ $\eta_0$) determine the sign, without being able to determine the sign itself from the available moments. Within the model, reducing the concentration of AI-complementary assets shifts the economy toward the equalizing regime; promoting open-source AI diffusion (reducing $\psi$ toward or below $\eta_0$) weakens the concentrating channel independently of labor market institutions; restricting AI capability impedes the equalizing channel without addressing the concentrating one. Whether these directional implications translate to welfare-improving policy depends on general equilibrium considerations---entry equilibrium, innovation incentives, dynamic market structure---that are beyond this partial-equilibrium model. These are scenario-conditional model implications, not evaluated policy recommendations. Extended welfare analysis is in Online Appendix~E.

\section{Conclusion}
\label{sec:conclusion}

AI compresses task-level skill differences on tasks where human and AI outputs are substitutable. Whether this compression translates into reduced aggregate inequality depends on AI's technology structure---proprietary ($\psi > \eta_0$, concentrating) vs.\ commodity ($\psi < \eta_0$, equalizing)---and labor market institutions ($\xi$, $\mathrm{Gini}(K)$). A scenario analysis via Method of Simulated Moments yields values near the regime boundary. The sensitivity decomposition shows that the five non-$\Delta$Gini moments identify the rates of both channels but their effects on the aggregate sign nearly cancel. The sign is pinned by $m_6$ and $\xi$ at the calibrated parameters; structurally, AI's technology structure ($\eta_1$ vs.\ $\eta_0$) is an independent determinant. The contribution is the mechanism---connecting task-level equalization to firm-level concentration through measurable parameters---not a verdict on which regime the economy inhabits.

We are explicit about what this paper does not establish. It does not identify the aggregate sign, nor does it empirically identify any causal link in the mechanism chain. The occupation-level wage regressions demonstrate a measurement mismatch: the model predicts within-task CV compression, but available data measure occupation-level wage dispersion, which bundles tasks in unknown proportions. Credible tests require within-occupation, within-task data that do not yet exist at scale.

\section*{Data and Code Availability}

Replication code for the Method of Simulated Moments calibration, bootstrap confidence intervals, sensitivity analyses, and all figures and tables will be posted on a public repository upon publication. The BLS OEWS wage data (2019--2023) are publicly available from \url{https://www.bls.gov/oes/}. The AI Occupational Exposure (AIOE) index is available from \citet{felten2021occupational}. O*NET Work Activities data are available from \url{https://www.onetonline.org/}. No restricted-access data were used in this paper.

\bibliographystyle{plainnat}
\bibliography{bibliography}

\end{document}